\documentclass[dvipsnames,12pt]{article}
\usepackage[utf8]{inputenc}
\usepackage{microtype,amsmath}
\usepackage{graphicx}
\usepackage{subfigure}
\usepackage{booktabs}
\usepackage{hyperref,amsthm}
\usepackage{amsfonts}

\usepackage{color}
\newcommand{\R}{\mathbb{R}}
\newcommand{\E}{\mathbb{E}}

\usepackage{caption}
\usepackage{algorithm,algorithmicx}
\usepackage[english]{babel}
\usepackage{algpseudocode}
\newtheorem{theorem}{Theorem}
\newtheorem{question}{Question}
\newtheorem{lemma}{Lemma}
\newtheorem{online_problem}{Online Learning Problem}
\newtheorem{definition}{Definition}
\newtheorem{problem}{Problem}
\newtheorem{claim}{Claim}
\newtheorem{remark}{Remark}
\newtheorem{example}{Example}
\usepackage[margin=1in]{geometry}
\title{Efficient Online Learning for Dynamic k-Clustering}

\makeatletter
\def\blfootnote{\xdef\@thefnmark{}\@footnotetext}
\makeatother

\author{%
  Dimitris Fotakis\\
  National Technical University\\ of Athens\\
  fotakis@cs.ntua.gr\\
  \and 
  Georgios Piliouras\\  Singapore University of Technology\\ and Design\\
  george.piliouras@gmail.com\\
  \and  
  Stratis Skoulakis\\
  Singapore University of Technology\\ and Design\\
  efstratios@sutd.edu.sg
}
\date{}

\begin{document}
\maketitle
\begin{abstract}
We study dynamic clustering problems from the perspective of online learning. 
We consider an online learning problem, called \textit{Dynamic $k$-Clustering}, in which $k$ centers are maintained in a metric space over time (centers may change positions) such as a dynamically changing
set of $r$ clients is served in the best possible way. The connection cost at round 
$t$ is given by the \textit{$p$-norm} of the vector consisting of the distance of each client to its closest center at round $t$, for some $p\geq 1$ or $p = \infty$. We present a \textit{$\Theta\left( \min(k,r) \right)$-regret} polynomial-time online learning algorithm and show that, under some well-established computational complexity conjectures, \textit{constant-regret} cannot be achieved in polynomial-time. In addition to the efficient solution of Dynamic $k$-Clustering, our work contributes to the long line of research on combinatorial online learning.
\end{abstract}

\section{Introduction}
\label{s:intro}

\textit{Clustering problems} are widely studied in Combinatorial Optimization literature due to their vast applications in Operational Research,
Machine Learning, Data Science and Engineering
\cite{WS11,LIN,CGTS99,VGRMMV01,CG99,JV01,K12,Y00,LS16,CL12,KSS10,SS18}. Typically a fixed number of centers must be placed in a metric space such that a set of clients is served the best possible way. The quality of a clustering solution is captured through the \textit{$p$-norm} of the vector consisting of the distance of each client to its closest center, for some $p\geq 1$ or $p = \infty$. For example \textit{$k$-median} and \textit{$k$-means} assume $p=1$ and $2$ respectively, while \textit{$k$-center} assumes $p=\infty$ \cite{LIN,KSS10,SS18}.

Today's access on vast data (that may be frequently updated over time) has motivated
the study of  
clustering problems in case of \textit{time-evolving clients}, which dynamically change positions over time \cite{KW18,FKKLSZ19,EMS14,ANS17}. In time-evolving clustering problems, centers may also change position over time so as to better capture the clients' trajectories. For example, a city may want to reallocate the units performing rapid tests for Covid-19 so as to better serve neighborhoods with more cases, 
the distribution of which may substantially change from day to day. Other interesting applications
of dynamic clustering include viral marketing, epidemiology, facility location (e.g. schools, hospitals), conference planning etc. \cite{JV11,EMS14,N3,PS01,CBK07}.

Our work is motivated by the fact that in most settings of interest, clients can move in fairly complicated and unpredictable ways, and thus, an \textit{a-priori knowledge} on such trajectories is heavily under question (most of the previous work assumes perfect knowledge on clients' positions over time \cite{EMS14,ANS17,KW18,FKKLSZ19}). To capture this lack of information we cast clustering problems under the perspective of \textit{online learning} \cite{H16}. We study an online learning problem called \textit{Dynamic $k$-Clustering} in which a \textit{learner} selects at each round $t$, the positions of $k$ centers trying to minimize the connection cost of some clients, the positions of which are unknown to the learner prior to the selection of the centers.    

\begin{online_problem}[Dynamic $k$-Clustering] Given a metric space $d:V \times V \mapsto \R_{\geq 0}$. At each round $t$,
\begin{enumerate}
    \item The learner selects a set $F_t \subseteq V$, with $|F_t| = k$, at which centers are placed.
    \item The adversary selects the positions of the clients, denoted as $R_t$ (after the selection of the positions of the centers by the learner).
    \item The learner suffers the connection cost of the clients,
    \[C_{R_t}(F_t) = \left(\sum_{j \in R_t} d(j,F_t)^p\right)^{1/p}\]
    where $d(j,F_t)$ is the distance of client $j$ to the closest center, $d(j,F_t) = \min_{i \in F_t}d_{ij}$.
\end{enumerate}
\end{online_problem}
Based on the past positions of the clients $R_1,R_2,\ldots, R_{t-1}$ an \textit{online learning algorithm} must select at each round $t$, a set of $k$ centers $F_t \subseteq V$ such that the 
connection cost of the clients over time is close to the connection cost of the \textit{optimal (static) solution} $F^\ast$. If the cost of the online learning algorithm is at most $\alpha$ times the cost of $F^\ast$, the algorithm is called $\alpha$-regret, whereas in case $\alpha = 1$, the algorithm is called \textit{no-regret} \cite{H16}. Intuitively, a low-regret online learning algorithm 
converges to the optimal positions of the centers (with respect to the overall trajectories of the clients) by just observing the clients' dynamics.

\begin{example}\label{ex:1}
The clients are randomly generated according to a time-varying uniform distribution with radius $0.3$ and center following the periodic trajectory $\left(\sin ( \frac{2\pi \cdot t}{T}),\cos ( \frac{2\pi \cdot t}{T})\right)$ for $t=1,\ldots,T$.
\begin{figure}[!htb]
\centering
  {\includegraphics[width=0.7\linewidth]{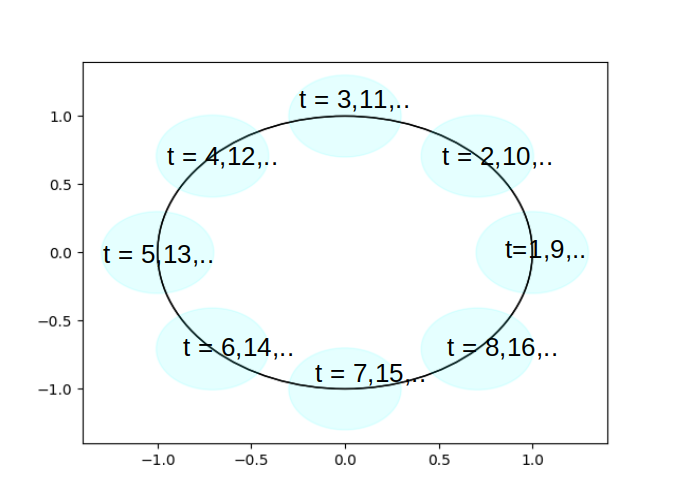}\label{fig:sub1}}\hfill
  \label{f:circle}
\end{figure}
The centers placed by a (sufficiently) low-regret algorithm would converge to positions similar in structure to the ones illustrated in Figure~\ref{f:circle2} (for $k=1,2,4$ and $k=8$) which are clearly close to the optimal (static) solution for the different values of $k$.
\begin{figure}[!htb]
\centering
 {\includegraphics[width=0.45\linewidth]{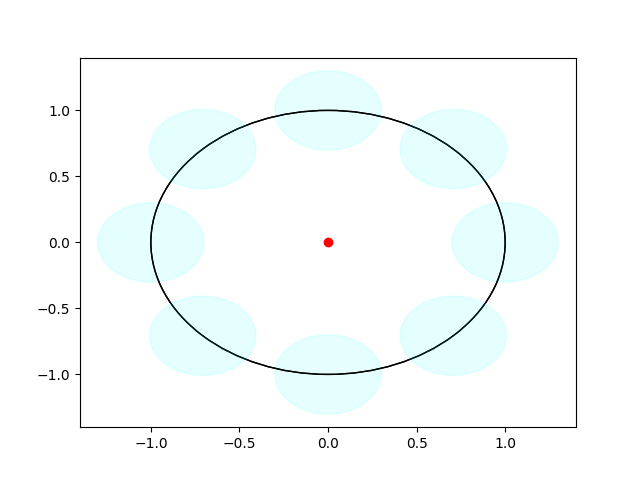}\label{fig:sub1}}\hfill
 {\includegraphics[width=0.45\linewidth]{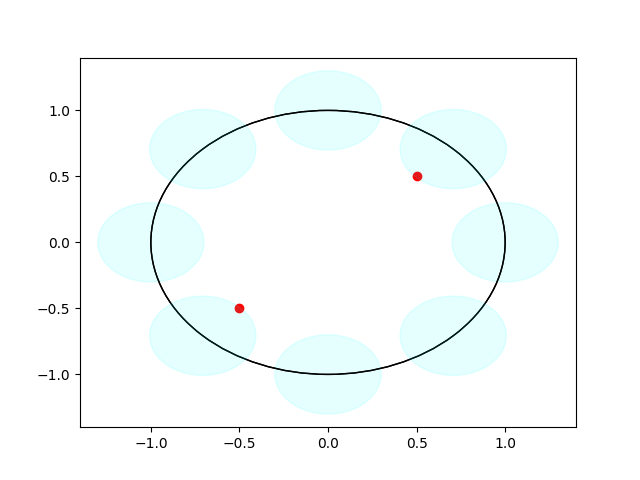}\label{fig:sub2}}\hfill
{\includegraphics[width=0.45\linewidth]{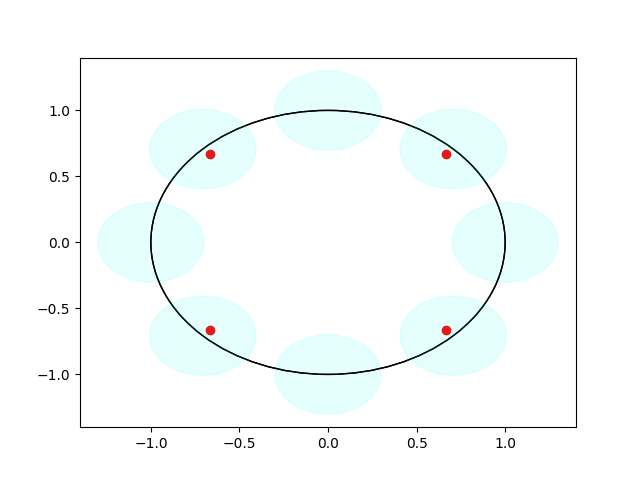}\label{fig:sub2}}\hfill
{\includegraphics[width=0.45\linewidth]{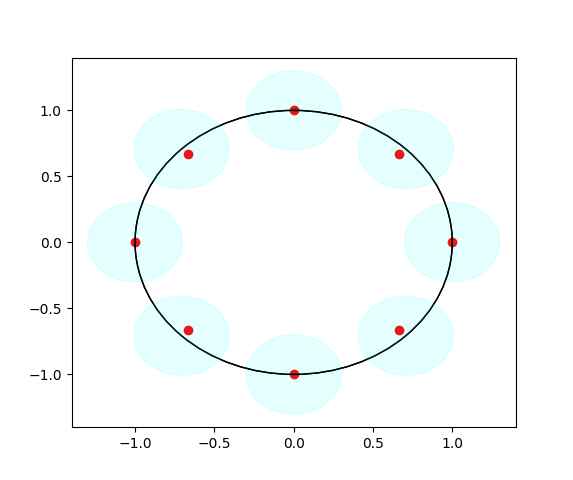}\label{fig:sub2}}\hfill
\caption{The figure depicts the actual centers at which a low-regret algorithm, that we subsequently propose, converges. For further details see Section~\ref{s:experiments}.}
\label{f:circle2}
\end{figure}
\end{example}

\textbf{Efficient Online Learning for Dynamic $k$-Clustering.}
The existence of no-regret online learning algorithms for Dynamic $k$-Clustering immediately follows by standard results in online learning literature \cite{H16}. Dynamic $k$-Clustering is a special case of \textit{Learning from Expert Advice} problem for which the famous \textit{Multiplicative Weights Update Algorithm} achieves no-regret \cite{H16}. Unfortunately using the $\mathrm{MWU}$ for Dynamic $k$-Clustering is not really an option due to the huge time and space complexity that $\mathrm{MWU}$ requires. In particular $\mathrm{MWU}$ keeps a different weight (probability) for each of the possible ${|V|}\choose{k}$ possible placements of the $k$ centers, rendering it inapplicable even for small values of $|V|$ and $k$.

Our work aims to shed light on the following question.
\begin{question}\label{q:main}
Is there an online learning algorithm
for Dynamic $k$-Clustering that runs in polynomial time and achieves $\alpha$-regret?
\end{question}

\smallskip
\textbf{Our Contribution and Techniques.}
%
We first show that constant regret cannot be achieved in polynomial time for Dynamic $k$-Clustering. In particular we prove that any $O(1)$-regret polynomial-time online learning algorithm for Dynamic $k$-Clustering 
implies the existence of an $O(1)$-approximation algorithm for the \textit{Minimum-$p$-Union problem} \cite{CDKKR16}. Recent works on the theory of computational complexity establish that unless well-established cryptographic conjectures fail, there is no $O(1)$-approximation algorithm for $\mathrm{Min}$-$p$-$\mathrm{Union}$ \cite{CDKKR16,A12,CDM17}. This result narrows the plausible regret bounds achievable in polynomial time, and reveals an interesting gap between Dynamic $k$-Clustering and its offline counterparts, which admit polynomial-time $O(1)$-approximation algorithms.

Our main technical contribution consists of polynomial-time online learning algorithms for Dynamic $k$-Clustering with non trivial regret bounds. We present a $\Theta(k)$-regret polynomial-time deterministic online learning algorithm and a $\Theta(r)$-regret polynomial-time randomized online learning algorithm,
where $r$ is the maximum number of clients
appearing in a single round ($r = \max_{1\leq t \leq T}|R_t|$). Combining these algorithms, one can achieve $\Theta\left( \min(k,r) \right)$-regret for Dynamic $k$-Clustering, which (to the best of our knowledge) is the first guarantee on the regret achievable in polynomial time. The regret bounds above are independent of the selected $p$-norm, and hold for any $p \geq 1$ and for $p = \infty$.

At a technical level, our approach consists of two major steps. In the first step, we consider an online learning problem, that can be regarded as the \textit{fractional relaxation} of the Dynamic $k$-Clustering (see Section~\ref{s:fractional}), where the \textit{fractional connection cost} is given by the optimal value of an appropriate convex program and 
the action space of the learner is the $|V|$-dimensional simplex. For this intermediate problem, we design a \textit{no-regret} polynomial-time online learning algorithm through the use of the subgradients of the fractional connection cost. We show that such subgradient vectors can be computed in polynomial time via the solution of
the dual program of the fractional connection cost. In the second step of our approach (see Section~\ref{s:det} and Section~\ref{s:rand}), we provide computationally efficient online (deterministic and randomized) rounding schemes converting a vector lying in the 
$|V|$-dimensional simplex (the action space of Fractional Dynamic $k$-Clustering) into $k$ locations for the centers on the metric space $V$ (the action space of Dynamic $k$-Clustering).

In Section~\ref{s:det}, we present a deterministic rounding scheme that, combined with the no-regret algorithm for Fractional Dynamic $k$-Clustering, leads to a $\Theta(k)$-regret polynomial-time deterministic online learning algorithm for the original Dynamic $k$-Clustering. 
Interestingly, this regret bound is approximately optimal for all deterministic
algorithms. In Section~\ref{s:rand}, we show that 
combining the no-regret algorithm for Fractional Dynamic $k$-Clustering with a
randomized rounding scheme proposed in \cite{CL12}\footnote{This randomized rounding scheme was part of a $4$-approximation algorithm for $k$-median \cite{CL12}} leads to a $\Theta(r)$-regret randomized algorithm running in polynomial time. Combining these two online learning algorithms, we obtain a $\Theta(\min(k,r))$-regret polynomial-time online learning algorithm for Dynamic $k$-Clustering, which is the main technical contribution of this work. Finally, in Section~\ref{s:experiments}, we present the results of an experimental evaluation, indicating that for client locations generated in a variety of natural and practically relevant ways, the realized regret of the proposed algorithms is way smaller than $\Theta\left( \min(k,r) \right)$.

\begin{remark}
Our two-step approach provides a structured framework for designing polynomial-time low-regret algorithms in various combinatorial domains. The first step 
extends far beyond the context of Dynamic $k$-Clustering and provides a systematic approach to the design of \textit{polynomial-time no-regret online learning algorithms} for the \textbf{fractional relaxation} of the combinatorial online learning problem of interest.
Combining such no-regret algorithms with 
online rounding schemes, which convert fractional solutions into integral solutions of the original online learning problem, may lead to polynomial time low-regret algorithms for various combinatorial settings. Obviously, designing such rounding schemes is usually far from trivial, since the specific combinatorial structure of each specific problem must be taken into account.
%
\end{remark}

\textbf{Related Work.}
Our work relates with the research line of Combinatorial Online Learning. There exists a long line of research studying low-regret online learning algorithms for various combinatorial domains such that
online routing \cite{HS97,AK08}, selection of permutations \cite{TW00,YHKSTT11,FLPS20,A14,HW07}, selection of binary search trees \cite{TM03}, submodular optimization \cite{HK12a,JB11,SG08}, matrix completion \cite{HKS12},
contextual bandits \cite{ALLS14,DHKKLRZ11} and many more. Finally, in combinatorial games agents need to learn to play optimally against each other over complex domains \cite{ITLMPT11,dehghani2016price}.
As in the case of Dynamic $k$-Clustering in all the above online learning problems, MWU is not an option, due to the exponential number of possible actions.  

Another research direction of Combinatorial Online Learning studies \textit{black-box reductions} converting polynomial time offline algorithm (full information on the data) into polynomial time online learning algorithms. \cite{kalai03} showed that any (offline) algorithm solving optimally and in polynomial time the objective function, that the \textit{Follow the Leader framework} suggests, can be converted into a no-regret online learning algorithm. \cite{kakade07} extended the previous result for specific class of online learning problems called \textit{linear optimization problems} for which they showed that any $\alpha$-approximation (offline) can be converted into an $\alpha$-regret online learning algorithm. They also provide a surprising counterexample showing that such black-box reductions do not hold for general combinatorial online learning problems. Both the 
time efficiency and the regret bounds of the reductions of \cite{kalai03} and \cite{kakade07} were subsequently improved by \cite{rahmanian17,suehiro12,koolen10,balcan06,syrganis17,hazan16,fujita13,garber17,wei18}. We remark that
the above results do not apply in our setting since 
Dynamic $k$-Clustering can neither be optimally solved in polynomial-time nor is a linear optimization problem.

Our works also relates with the more recent line of research studying clustering problems with \textit{time-evolving clients}. \cite{EMS14} and \cite{ANS17} respectively provide $\Theta\left( \log (nT)\right)$ and $O(1)$-approximation algorithm for a generalization of the facility location problem in which clients change their positions over time. The first difference of Dynamic $k$-Clustering with this setting is that 
in the former case there is no constraint on the number of centers that can open 
 and furthermore, crucially
perfect knowledge of the positions of the clients is presumed. 
More closely related to our work are \cite{KW18,FKKLSZ19}, where the special case of Dynamic $k$-Clustering on a line is studied (the clients move on a line over time). Despite the fact that both works study online algorithms, which do not require knowledge on the clients' future positions, they only provided positive results for $k=1$~and~$2$.

\section{Preliminaries and Our Results}\label{s:prelim}
In this section we introduce notation and several key notions as long as present the formal Statements of our results. 

We denote by $D$ the diameter of the metric space, $D = \max_{i \in V, j \in V} d_{ij}$. We denote with $n$ the cardinality of the metric space $\left(|V| =n\right)$ and with $r$ the maximum number of clients appearing in a single round, $r = \max_{1\leq t \leq T}|R_t|$. Finally we denote with $\Delta_{n}^k$ the $n$-dimensional simplex, $\Delta_{n}^k= \{y \in \mathbb{R}^n:~ \sum_{i \in V} y_i = k ~\mathrm{and}~y_i \geq 0\}$.

Following the standard notion of regret in online learning \cite{H16}, we provide the formal definition of an \textit{$\alpha$-regret} online learning algorithm for \textit{Dynamic $k$-Clustering}.
\begin{definition}\label{d:regret}
An online learning algorithm for the \textit{Dynamic $k$-Clustering} is $\alpha$-regret if and only if for any sequence of clients' positions $R_1,\ldots,R_T \subseteq V$,

\[\sum_{t=1}^T C_{R_t}(F_t) \leq \alpha \cdot  \min_{|F^\ast| \leq k} \sum_{t=1}^T C_{R_t}(F^\ast) + \Theta\left(\mathrm{poly}(n,D) \cdot T^\beta\right)\]
where $F_1,\ldots,F_T$ are the positions of the centers produced by the algorithm for the sequence $R_1,\ldots,R_T$ and $\beta < 1$. 
\end{definition}

Next, we introduce the \textit{Minimum-$p$-Union} problem,
the inapproximability results of which allow us to establish that constant regret cannot be achieved in polynomial time for Dynamic $k$-Clustering.
\begin{problem}[$\mathrm{Min-}p\mathrm{-Union}$]
Given a universe of elements $\mathbb{E}$ and a collection of sets
$\mathbb{U} =\{S_1, \ldots, S_m\}$ where $S_i \subseteq \mathbb{E}$. Select $\mathbb{U}' \subseteq \mathbb{U}$ such that $|\mathbb{U'}| =p$ and $|\cup_{S_i \in \mathbb{U}'}S_i|$ is minimized.
\end{problem}

As already mentioned, the existence of an $O(1)$-approximation algorithm for $\mathrm{Min-}p\mathrm{-Union}$ violates several widely believed conjectures in computational complexity theory\cite{CDKKR16,A12,CDM17}. In Theorem~\ref{t:hardnes} we establish the fact that the exact same conjectures are violated in case there exists an online learning algorithm for \textit{Dynamic $k$-Clustering} that runs in polynomial-time and achieves $O(1)$-regret. 
\begin{theorem}\label{t:hardnes}
Any $c$-regret polynomial-time online learning algorithm for the Dynamic $k$-Clustering implies a $(c+1)$-approximation polynomial-time algorithm for $\mathrm{Min-}p\mathrm{-Union}$.
\end{theorem}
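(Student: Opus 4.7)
I would prove this by a reduction from Min-$p$-Union. Given an instance $(\mathbb{E},\{S_1,\ldots,S_m\},p)$ of Min-$p$-Union, I would build a Dynamic $k$-Clustering instance over the metric space $V = \mathbb{E}\cup\{v_1,\ldots,v_m\}$, where $v_i$ is a distinguished point representing the set $S_i$. A natural distance assignment is $d(v_i,e) = 1$ if $e\in S_i$ and $d(v_i,e) = 2$ otherwise, with $d(u,w) = 2$ for any other pair of distinct points in $V$; one checks that this is a valid metric (all distances lie in $\{1,2\}$, so the triangle inequality is automatic). Set $k = p$, and design an adversary's sequence $R_1,\ldots,R_T$ tailored to the instance so that the static optimum $F^\star$ of the resulting online problem corresponds to a Min-$p$-Union solution $\mathbb{U}^\star$, with its total cost $\sum_t C_{R_t}(F^\star)$ proportional (up to a controlled additive shift) to $T\cdot|U^\star|$, where $U^\star = \cup_{S\in \mathbb{U}^\star} S$.

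Running the $c$-regret polynomial-time algorithm $\mathcal{A}$ for $T = \mathrm{poly}(n,m)$ rounds and applying the regret guarantee yields
\[
\tfrac{1}{T}\sum_{t=1}^T C_{R_t}(F_t) \;\leq\; c\cdot \tfrac{1}{T}\sum_{t=1}^T C_{R_t}(F^\star) + o(1),
\]
so some round $t^\star$ satisfies $C_{R_{t^\star}}(F_{t^\star}) \leq c\cdot C_{R_{t^\star}}(F^\star) + o(1)$. From $F_{t^\star}$ I extract a candidate $p$-subset $\mathbb{U}'\subseteq\mathbb{U}$ by replacing any element-node appearing in $F_{t^\star}$ with a set-representative $v_i$ whose $S_i$ contains that element (choosing to minimize induced union growth). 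A direct computation on the metric then bounds $|\cup_{S\in\mathbb{U}'}S| \leq c\cdot|U^\star| + |U^\star| = (c+1)\cdot|U^\star|$, where the additive $+1$ encapsulates the unit-scale error incurred when decoding a mixed configuration into a pure set-selection.

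The main obstacle will be designing the adversary's reveal sequence so that the induced clustering cost is genuinely proportional to $|\cup S_{i_\ell}|$ on set-node configurations, rather than to its complement. The naive choice $R_t = \mathbb{E}$ yields a cost that is monotonically \emph{decreasing} in union size, which encodes Max-$p$-Coverage (constant-approximable, hence useless for Min-$p$-Union hardness). A correct reduction thus requires a more delicate gadget, for instance by adding auxiliary clients that penalize covered elements, by using a carefully chosen $p$-norm, or by introducing weighted client multiplicities so that the comparison $C_{R_t}(F_{t^\star})\leq c\cdot C_{R_t}(F^\star)$ translates back into an upper bound on $|\cup_{S\in\mathbb{U}'}S|$ rather than a lower bound. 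Once this direction-flipping is in place, the remaining steps (tightness of the $+1$, and handling of $F_{t^\star}$ that mixes set-nodes and element-nodes) become a routine accounting exercise on the two-valued metric.
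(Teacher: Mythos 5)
You have correctly identified the central obstacle but not overcome it, and the construction you sketch (metric on $V = \mathbb{E}\cup\{v_1,\ldots,v_m\}$ with $k=p$) does not actually carry the reduction. The paper's proof resolves the ``direction flip'' with a much leaner construction that you do not arrive at.

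Concretely, the paper reduces $\mathrm{Min}$-$p$-$\mathrm{Union}$ to an intermediate offline problem ($k$-Center on a \emph{uniform} metric with $p=\infty$), and the metric space is simply the $m$ set-nodes, with no element-nodes at all. For each element $e\in\mathbb{E}$ one forms a request $R_e\subseteq V$ consisting of the nodes of the sets containing $e$; since all nonzero distances are $1$ and $p=\infty$, $C_{R_e}(V')$ is $0$ if all of $R_e$ lies in $V'$ and $1$ otherwise, so $\sum_e C_{R_e}(V') = |\cup_{S_i\notin V'}S_i|$. The direction-flip you worry about is then handled by choosing $k=m-p$ rather than $k=p$: the $p$ sets \emph{not} opened as facilities are exactly the $\mathrm{Min}$-$p$-$\mathrm{Union}$ selection, and minimizing the clustering cost minimizes their union. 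Your own choice of $k=p$ plus $R_t=\mathbb{E}$ makes cost decrease with union size, which (as you notice) encodes $\mathrm{Max}$-$p$-$\mathrm{Coverage}$ and is useless here; the fix is not an auxiliary gadget, a cleverer $p$-norm, or weighted multiplicities, but the complementation $k\mapsto m-p$. The second step (converting an $\alpha$-regret online algorithm into an $(\alpha+1)$-approximation for the offline problem) is close in spirit to what you write: feed the online algorithm a uniformly random $R_e$ at each of $T$ rounds and output a uniformly random round's facility set; the additive regret term vanishes for $T$ large enough, giving the extra $+1$. But without the complementation trick and the uniform-metric/$p=\infty$ encoding, your reduction as written does not go through; the ``routine accounting exercise'' you defer is where the actual idea lives.
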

In Section~\ref{s:det}, we present a polynomial-time deterministic online learning algorithm achieving \textit{$\Theta(k)$}-regret.

\begin{theorem}\label{t:det-regret}
There exists a $6k$-regret deterministic online learning algorithm for Dynamic $k$-Clustering that runs in polynomial time (Algorithm~\ref{alg:det}). More precisely,
\[\sum_{t=1}^T\mathrm{C}_{R_t}(F_t) \leq  6k \cdot \min_{|F^\ast|=k} \sum_{t=1}^T\mathrm{C}_{R_t}(F^\ast) + \Theta \left (k D n \sqrt{\log n  T} \right)\]
where $F_1,\ldots,F_T$ are the positions in which Algorithm~\ref{alg:det} places the centers for the sequence of clients' positions $R_1,\ldots,R_T$.
\end{theorem}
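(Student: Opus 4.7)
The plan is to instantiate the two-step framework outlined in Section~\ref{s:fractional}. In the first step, I would invoke the polynomial-time no-regret algorithm for the fractional relaxation of Dynamic $k$-Clustering, producing a sequence of fractional iterates $y_1,\ldots,y_T\in\Delta_n^k$ whose cumulative fractional cost is close to the best fixed $y^\ast\in\Delta_n^k$ in hindsight. In the second step, I would deterministically round each $y_t$ into a set $F_t\subseteq V$ with $|F_t|=k$ so that $C_{R_t}(F_t)\le 6k\,\tilde{C}_{R_t}(y_t)$ for \emph{every} possible client set $R_t$ (the rounding is performed before the adversary moves). Because the fractional optimum is at most the integral one, composing the $\Theta(\sqrt{T})$-regret on the fractional side with the per-round factor-$6k$ rounding gap yields exactly the claimed bound, with the factor $6k$ multiplying both the offline optimum and the additive regret term.

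To justify that the fractional optimum lower-bounds the integral one, recall that $\tilde{C}_R(y)$ is the optimal value of the linear program
\[
\tilde{C}_R(y)\ =\ \min_x\Big(\sum_{j\in R}\Big(\sum_{i\in V} x_{ij} d_{ij}\Big)^p\Big)^{1/p}\qquad \text{s.t.}\quad \sum_{i\in V} x_{ij}=1,\ 0\le x_{ij}\le y_i,
\]
whose constraints decouple across clients. Writing $\phi(j,y)$ for the one-client fractional serving cost, we have $\tilde{C}_R(y)=\|\phi(\cdot,y)\|_{p,R}$. Plugging $y=\mathbf{1}_{F^\ast}$ for any integral $F^\ast$ of size $k$ gives $\phi(j,\mathbf{1}_{F^\ast})=d(j,F^\ast)$, so $\tilde{C}_R(\mathbf{1}_{F^\ast})=C_R(F^\ast)$. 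Specializing the no-regret guarantee of Section~\ref{s:fractional} to the benchmark $y=\mathbf{1}_{F^\ast}$ then yields $\sum_t \tilde{C}_{R_t}(y_t)\le\min_{|F^\ast|=k}\sum_t C_{R_t}(F^\ast)+\Theta\bigl(Dn\sqrt{\log n\cdot T}\bigr)$.

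The crux is constructing a polynomial-time map $\rho:\Delta_n^k\to\binom{V}{k}$ that universally satisfies $d(v,\rho(y))\le 6k\,\phi(v,y)$ for every $v\in V$ and every $y\in\Delta_n^k$; by monotonicity of the $p$-norm this per-client bound promotes to $C_R(\rho(y))\le 6k\,\tilde{C}_R(y)$ for every $R$. I would start from a Markov-type observation on the one-client LP: the ball $B(v,2\phi(v,y))$ carries at least half a unit of $y$-mass. From here I would run a greedy scheme that processes the vertices of $V$ in increasing order of $\phi(\cdot,y)$, adds to $F_t$ a representative inside $B(v,O(k)\phi(v,y))$ whenever no current representative lies in that ball, and certifies via the triangle inequality that every remaining vertex is then served within $6k\,\phi(\cdot,y)$. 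The main obstacle is calibrating the expansion constant so that a disjoint-ball packing argument yields cardinality exactly $k$ rather than $2k$ or $3k$: the half-mass Markov bound is off by a factor of two, and closing this gap—either by passing to a higher quantile at the cost of a larger radius, or by pruning lowest-mass representatives and absorbing the loss into the constant $6$—is the delicate balance between the packing bound and the coverage bound. Once the rounding is in place, summing $C_{R_t}(F_t)\le 6k\,\tilde{C}_{R_t}(y_t)$ over $t$ and combining with the fractional no-regret inequality above gives the desired $6k$-regret with additive term $\Theta(kDn\sqrt{\log n\cdot T})$.
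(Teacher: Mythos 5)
Your two-step architecture --- a no-regret algorithm for the fractional relaxation (via subgradients of $\mathrm{FC}_R$), followed by a deterministic rounding that is computed \emph{before} the adversary reveals $R_t$ and is therefore certified against \emph{every} $R$ --- is exactly the paper's. The observations that the fractional LP decouples across clients, that the per-vertex quantities $\phi(\cdot,y)$ (the paper's $\beta_i$ from running Algorithm~\ref{alg:dual} on $R=V$) drive the rounding, and that $\mathrm{FC}_R(\mathbf{1}_{F^\ast})=C_R(F^\ast)$ gives Lemma~\ref{l:frac_int} for free, are all present in the paper as well.

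The genuine gap is the one you flag yourself and do not close: the Markov bound you start from, ``$B(v,2\phi(v,y))$ carries at least half a unit of $y$-mass,'' together with a disjoint-ball packing argument, only gives $|F_y|\le 2k$, not $|F_y|\le k$. You list two plausible repairs but commit to neither, and only the first one actually works here; pruning low-mass representatives does not obviously preserve the universal per-vertex coverage guarantee. The paper's resolution is precisely your ``higher quantile at the cost of a larger radius'' route, and the calibration is: for each $j$ selected by the greedy set $B_j=\{i\in V: d_{ij}\le 3k\beta_j\}$. Markov at level $3k$ gives $\sum_{i\in B_j}y_i\ge 1-\tfrac{1}{3k}$ (if the $x$-mass outside $B_j$ were at least $\tfrac{1}{3k}$, then $\beta_j>\beta_j$). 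The greedy adds $j$ only when $d(j,F_y)>6k\beta_j$, so for distinct $j,j'\in F_y$ with (WLOG) $\beta_j\le\beta_{j'}$, any $m\in B_j\cap B_{j'}$ would give $d(j,j')\le 3k\beta_j+3k\beta_{j'}\le 6k\beta_{j'}$, contradicting that $j'$ was admitted after $j$; hence the $B_j$'s are disjoint and $k=\sum_{i\in V}y_i\ge |F_y|\bigl(1-\tfrac{1}{3k}\bigr)$, which forces $|F_y|\le k$. Coverage at radius $6k\beta_v$ holds for every $v$ by construction, and monotonicity of the $p$-norm lifts this to $C_R(F_y)\le 6k\,\mathrm{FC}_R(y)$ for all $R$, after which your summation over $t$ and the invocation of Theorem~\ref{t:no-regret-frac} and Lemma~\ref{l:frac_int} go through as you sketch. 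In short: right framework, right obstacle identified, but the decisive packing/coverage calibration ($3k$ vs.\ $6k$ and mass $1-\tfrac{1}{3k}$) is missing from your argument.
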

In Theorem~\ref{t:lower_bound_det} we prove that the $\Theta(k)$ bound on the regret of Algorithm~\ref{alg:det} cannot be significantly ameliorated with deterministic online learning algorithm even if the algorithm uses exponential time and space.

\begin{theorem}\label{t:lower_bound_det}
For any deterministic online learning algorithm for Dynamic $k$-Clustering problem, there exists a sequence of clients $R_1,\ldots,R_T$ such as the regret is at least $k+1$.
\end{theorem}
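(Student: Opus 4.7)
The plan is to exhibit a metric space and an adaptive adversary that, against any deterministic algorithm, pushes the online cost to $T$ while the best static cost stays at most $T/(k+1)$, so the ratio in the regret inequality is forced to be at least $k+1$. Concretely, I take the uniform metric on $n=k+1$ points: $d(i,j)=1$ whenever $i\neq j$. At each round, after the deterministic algorithm commits to its set $F_t\subseteq V$ with $|F_t|=k$, the adversary plays the unique point $i_t\in V\setminus F_t$ and sets $R_t=\{i_t\}$. This is a well-defined deterministic adaptive strategy, and together with the algorithm it produces a concrete sequence $R_1,\ldots,R_T$ on which we evaluate the regret.

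For the online cost, each $R_t$ is a single client at a point not in $F_t$, so for every $p\in[1,\infty]$ we get $C_{R_t}(F_t)=d(i_t,F_t)=1$, and therefore $\sum_{t=1}^T C_{R_t}(F_t)=T$. For the static benchmark, any $F^\ast\subseteq V$ with $|F^\ast|=k$ excludes a single point $j^\ast\in V$, and its total cost equals $n_{j^\ast}:=|\{t:i_t=j^\ast\}|$, so $\min_{|F^\ast|=k}\sum_t C_{R_t}(F^\ast)=\min_{j\in V}n_j$. Since the $n_j$ are non-negative integers summing to $T$ over $k+1$ indices, pigeonhole yields $\min_{j\in V} n_j\leq T/(k+1)$.

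Feeding both bounds into Definition~\ref{d:regret}, if the algorithm were $\alpha$-regret with $\alpha<k+1$ we would need $T\leq \alpha\cdot T/(k+1)+\Theta(\mathrm{poly}(n,D)\,T^\beta)$ for some $\beta<1$, i.e., $(1-\alpha/(k+1))\,T\leq \Theta(\mathrm{poly}(k)\,T^\beta)$, which fails for all sufficiently large $T$. Hence every deterministic algorithm incurs regret at least $k+1$ on the instance above. The argument is technically light; the only real subtlety is that we need $|V\setminus F_t|=1$ at every round to keep the counts $n_j$ entirely determined by the learner's play against a forced adversary, which is precisely why I size the metric at $n=k+1$ (taking $|V|$ larger would give the learner room to diversify and let the adversary lose tight control over the distribution of $n_j$), and the fact that the per-round cost equals $1$ uniformly in $p\in[1,\infty]$ follows at once from $|R_t|=1$, so the bound is independent of the chosen $p$-norm.
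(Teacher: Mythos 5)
Your proof is correct and follows essentially the same route as the paper's: uniform metric on $k+1$ points, adversary placing one client at the single uncovered point each round, yielding online cost $T$ versus static benchmark at most $T/(k+1)$ by pigeonhole. Your write-up is just a more careful elaboration of the same argument, including the explicit check that the per-round cost is independent of $p$ and the formal comparison against Definition~\ref{d:regret}.
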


In Section~\ref{s:rand} we present a randomized online learning algorithm the regret of which depends on the parameter $r$.

\begin{theorem}\label{t:rand-regret}
There exists a $\Theta(r)$-regret randomized algorithm that runs in polynomial time (Algorithm~\ref{alg:rand}).
For any sequence of clients' positions $R_1,\ldots,R_T$ with $|R_t| \leq r$,
\begin{equation*}
\begin{split}
\sum_{t=1}^T \mathbb{E}\left[C_{R_t}(F_t)\right] & = 4r \cdot \min_{|F^\ast|=k} \sum_{t=1}^T\mathrm{C}_{R_t}(F^\ast)\\ &+ \Theta \left (k D n \sqrt{\log n  T} \right)
\end{split}
\end{equation*}
where $F_t$ is the random variable denoting the $k$ positions at which Algorithm~\ref{alg:rand} places the centers at round $t$.
\end{theorem}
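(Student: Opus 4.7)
I would instantiate the two-step ``fractional algorithm + online rounding'' framework announced in the introduction. As the first step, invoke the polynomial-time no-regret algorithm for the fractional relaxation of Dynamic $k$-Clustering constructed in Section~\ref{s:fractional}, and let $y_1,\ldots,y_T \in \Delta_n^k$ be the vectors it outputs. Writing $\widetilde{C}_{R_t}(y)$ for the fractional connection cost (the optimal value of the natural LP that fractionally assigns each $j \in R_t$ to the fractional centers $y$), that algorithm guarantees
\[
\sum_{t=1}^T \widetilde{C}_{R_t}(y_t) \;\leq\; \min_{y^\ast\in\Delta_n^k}\sum_{t=1}^T \widetilde{C}_{R_t}(y^\ast) \;+\; \Theta\!\left(kDn\sqrt{\log n\cdot T}\right).
\]
Since every integral solution $F^\ast$ with $|F^\ast|=k$ embeds in $\Delta_n^k$ as the indicator vertex $y^\ast_i=\mathbf{1}[i\in F^\ast]$ and achieves $\widetilde{C}_{R_t}(y^\ast)=C_{R_t}(F^\ast)$ there, the fractional benchmark is at most the integral benchmark $\min_{|F^\ast|=k}\sum_t C_{R_t}(F^\ast)$.

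As the second step, at each round $t$ I would apply the polynomial-time randomized rounding scheme of Charikar and Li~\cite{CL12} to $y_t$ (extended to an LP solution $(x_t,y_t)$ in which $x_{ij}$ is $j$'s optimal fractional assignment under $y_t$), producing integral centers $F_t\subseteq V$ with $|F_t|=k$. The feature of their scheme that is critical here, and that allows it to cope with every $p$-norm simultaneously, is a per-\emph{client} expectation guarantee:
\[
\mathbb{E}[d(j,F_t)] \;\leq\; 4\cdot \widetilde{d}(j,y_t) \qquad\text{for every } j\in V,
\]
where $\widetilde{d}(j,y_t)$ denotes $j$'s fractional distance under $y_t$. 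Because this bound is uniform over all potential clients in $V$, it applies to the (future and adversarial) set $R_t$ revealed after $F_t$ is committed.

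Given these ingredients, bounding $\mathbb{E}[C_{R_t}(F_t)]$ goes through the monotone norm inequalities $\|x\|_p \leq \|x\|_1 \leq r^{1-1/p}\|x\|_p \leq r\|x\|_p$, valid for every non-negative $|R_t|$-dimensional vector and every $p \geq 1$ (and in the limit form $\|x\|_\infty \geq \|x\|_1/r$ for $p=\infty$). Combining them with linearity of expectation and the per-client guarantee yields
\[
\mathbb{E}[C_{R_t}(F_t)] \;\leq\; \sum_{j\in R_t}\mathbb{E}[d(j,F_t)] \;\leq\; 4\sum_{j\in R_t}\widetilde{d}(j,y_t) \;\leq\; 4r\cdot \widetilde{C}_{R_t}(y_t).
\]
Summing over $t$ and chaining with the no-regret fractional guarantee and the fractional-vs-integral benchmark inequality then gives the theorem. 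The main obstacle is precisely this last display: since $\|\cdot\|_p$ is nonlinear in the distance vector, the per-client CL guarantee does not collapse directly into a $4$-factor on $\mathbb{E}[C_{R_t}(F_t)]$, and the cheapest uniform bridge one can build between $\ell_1$ and $\ell_p$ is $\|x\|_1\leq r\|x\|_p$ --- it is this crude step that injects the $\Theta(r)$ into the multiplicative regret, and any essential improvement would require a rounding scheme whose analysis controls $p$-norms more tightly than per-client expectations allow.
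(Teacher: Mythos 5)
Your proposal is correct and follows essentially the same route as the paper's proof: the same three-step chain (the $\ell_p\le\ell_1$ relaxation to pass to a sum of per-client costs, the per-client $4$-approximation guarantee of Charikar--Li, and the $\ell_1\le r\,\ell_p$ bound to return from per-client fractional distances to $\mathrm{FC}_{R_t}(y_t)$), combined with Theorem~\ref{t:no-regret-frac} and Lemma~\ref{l:frac_int}. The paper phrases the last step as $\mathrm{FC}_{\{j\}}(y)\le\mathrm{FC}_{R}(y)$ summed over the $\le r$ clients rather than via the explicit $\|x\|_1\le r^{1-1/p}\|x\|_p$ norm inequality, but the substance is identical.
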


By combining Algorithm~\ref{alg:det} and Algorithm~\ref{alg:rand} we can achieve $\Theta \left(\min(k,r)\right)$-regret in polynomial time.

\begin{theorem}\label{t:main}
There exists an online learning algorithm for Dynamic $k$-Clustering that runs in polynomial-time and achieves $\min\left(6k,4r \right)$-regret.
\end{theorem}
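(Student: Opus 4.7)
The statement is an immediate corollary of Theorems~\ref{t:det-regret} and \ref{t:rand-regret}, and the plan is to build a trivial meta-algorithm that, given $k$ and an upper bound on $r = \max_t |R_t|$, commits upfront to whichever of Algorithm~\ref{alg:det} and Algorithm~\ref{alg:rand} yields the smaller multiplicative regret constant. Concretely, if $6k \le 4r$ the meta-algorithm runs Algorithm~\ref{alg:det}, which by Theorem~\ref{t:det-regret} attains deterministic $6k = \min(6k,4r)$-regret. If instead $6k > 4r$, it runs Algorithm~\ref{alg:rand}, which by Theorem~\ref{t:rand-regret} attains $4r = \min(6k,4r)$-regret in expectation.

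In either branch, both the polynomial per-round running time and the additive lower-order term $\Theta(kDn\sqrt{\log n\,T})$ carry over from the invoked subroutine unchanged, so the combined algorithm satisfies Definition~\ref{d:regret} with $\alpha = \min(6k,4r)$ and the correct $\beta < 1$. The proof is therefore essentially a case analysis on the sign of $6k - 4r$; no new algorithmic ideas are needed beyond the two theorems and a single comparison.

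The only mild obstacle is that $r$ is an \emph{a posteriori} quantity, not literally known at the start of the sequence. The cleanest resolution is to assume—as is standard in online clustering formulations—that a (known) upper bound on the per-round number of clients is part of the problem input, and to plug this bound into the branch test; all the bounds above then hold verbatim with that upper bound in place of $r$. If instead no such bound is supplied, I would maintain a running maximum $\hat{r}_t = \max_{s \le t}|R_s|$ and employ a standard doubling trick, restarting the randomized subroutine each time $\hat{r}_t$ doubles. Summing a geometric series shows this preserves the $\min(6k,4r)$ leading factor and only inflates the sublinear additive term by a constant, which is the one step in the argument requiring any nontrivial verification.
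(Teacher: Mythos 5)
Your proof correctly identifies that the statement reduces to Theorems~\ref{t:det-regret} and \ref{t:rand-regret}, and that the only real work is handling the fact that $r$ is a posteriori. The paper's remark after Theorem~\ref{t:main} treats this the same way in the first case (if $r$ is known, the claim is immediate), but resolves the unknown-$r$ case differently: rather than a doubling trick, the paper proposes running a Multiplicative Weights Update at the \emph{meta} level, treating Algorithm~\ref{alg:det} and Algorithm~\ref{alg:rand} as two experts and at each round following one of them with a probability distribution tuned to their accumulated costs; the standard expert bound then guarantees that the meta-algorithm's time-averaged cost tracks the better of the two subroutines up to a sublinear additive term. Your doubling trick is a legitimate alternative and in fact sidesteps a subtlety the paper leaves implicit, namely that one of the ``experts'' is randomized and its per-round cost must be observed or simulated for the meta-MWU to work; on the other hand, your approach requires verifying that restarting Algorithm~\ref{alg:rand} (and the underlying Algorithm~\ref{alg:frac_no_regret}, whose learning rate $\epsilon$ depends on $r$) at most $O(\log r)$ times only inflates the additive $\Theta(kDn\sqrt{\log n \cdot T})$ term by a $\sqrt{\log r}$ factor via Cauchy--Schwarz, which you flag but do not carry out. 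Both routes reach the theorem; the paper's two-expert MWU is arguably cleaner because it avoids re-analyzing the fractional subroutine under a misestimated $r$, while your doubling argument is more self-contained and does not rely on an additional layer of online learning.
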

\begin{remark}
In case the value $r = \min_{1\leq t \leq T}|R_t|$ is initially known to the learner, then Theorem~\ref{t:main} follows directly by Theorem~\ref{t:det-regret}~and~\ref{t:rand-regret}. However even if $r$ is not initially known, the learner can run a Multiplicative Weight Update Algorithm that at each round follows either Algorithm~\ref{alg:det} or Algorithm~\ref{alg:rand} with some probability distribution depending on the cost of each algorithm so far. By standard results for MWU \cite{H16}, this meta-algorithm admits time-average cost less than the best of Algorithm~\ref{alg:det}~and~\ref{alg:rand}.
\end{remark}
\section{Fractional Dynamic $k$-Clustering
}\label{s:fractional}
In this section we present the \textit{Fractional Dynamic $k$-Clustering} problem for which we provide a polynomial-time no-regret online learning algorithm. This online learning algorithm serves as a primitive for both Algorithm~\ref{alg:det} and Algorithm~\ref{alg:rand} of the subsequent sections concerning the original Dynamic $k$-Clustering.

The basic difference between Dynamic $k$-Clustering and Fractional Dynamic $k$-Clustering is that in the second case the learner can \textit{fractionally} place a center at some point of the metric space $V$. Such a fractional opening is described by a vector $y \in \Delta_{n}^k$.

\begin{online_problem}\label{pr:frac}
[Fractional Dynamic $k$-Clustering]At each round $t \geq 1$, 
\begin{enumerate}
    \item The learner selects a vector $y_t \in \Delta_{n}^k$. The value $y_i^t$ stands for the fractional amount of center that the learner opens in position $i \in V$. 
    
    \item The adversary selects the positions of the clients denoted by $R_t \subseteq V$ (after the selection of the vector $y_t$).
    
        \item The learner incurs fractional connection cost $\mathrm{FC}_{R_t}(y_t)$ described in Definition~\ref{d:frac_cost}.
\end{enumerate}
\end{online_problem}

\begin{definition}[Fractional Connection Cost]\label{d:frac_cost}
Given the positions of the clients $R \subseteq V$, we define the fractional connection cost $\mathrm{FC}_{R}(\cdot)$ of
a vector $y \in \Delta_n^k$ as
the optimal value of the following convex program.
\begin{equation}
\begin{array}{lr@{}ll}
\mbox{\emph{minimize}}  \left(\sum_{j \in R}\beta_j^p \right)^{1/p}
\\
\\
\mathrm{}{s.t.}~~~~ \beta_j = \sum\limits_{i \in V} d_{ij} \cdot x_{ij} \,\,~~~~\forall j \in R\\
~~~~~~~~~ \sum\limits_{i \in V} x_{ij} = 1 \,\,~~~~~~~~~~~~~~~\forall j \in R\\
~~~~~~~~  x_{ij} \leq y_i \,\,~~~~~~~\forall j \in R,~\forall i \in V\\
~~~~~~~~  x_{ij} \geq 0 \,\,~~~~~~~~\forall j \in R,~\forall i \in V
\end{array}
\end{equation}
\end{definition}
It is not hard to see that once the convex program of Definition~\ref{d:frac_cost} is formulated with respect to an \textit{integral vector} $y\in \Delta_n^k$ ($y_i$ is either $0$ or $1$) the 
fractional connection cost $\mathrm{FC}_{R}(y)$ equals the original connection cost $\mathrm{C}_{R}(y)$. As a result, the cost of the optimal solution $y^\ast \in \Delta_{k}^n$ of Fractional Dynamic $k$-Clustering is upper bounded by the cost of the optimal positioning of the centers $F^\ast$ in the original Dynamic $k$-Clustering.

\begin{lemma}\label{l:frac_int}
For any sequence of clients' positions $R_1,\ldots,R_T$, the cost of the optimal fractional solution $y^\ast$ for Fractional Dynamic $k$-Clustering
is smaller than the cost of the optimal positioning $F^\ast$ for Dynamic $k$-Clustering,
\[ \min_{y^\ast \in \Delta_n^k}\sum_{t=1}^T \mathrm{FC}_{R_t}(y^\ast) \leq \min_{ |F^\ast| =k}\sum_{t=1}^T \mathrm{C}_{R_t}(F^\ast)\]
\end{lemma}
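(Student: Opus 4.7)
The plan is to prove the lemma by exhibiting a specific fractional solution whose cost matches the integral one, and then invoking the minimization in the definition of $\mathrm{FC}$. Concretely, fix any integral optimal placement $F^\ast \subseteq V$ with $|F^\ast| = k$ and define the indicator vector $\bar{y} \in \mathbb{R}^n$ by $\bar{y}_i = 1$ if $i \in F^\ast$ and $\bar{y}_i = 0$ otherwise. Since $\sum_i \bar{y}_i = k$ and $\bar{y}_i \geq 0$, we have $\bar{y} \in \Delta_n^k$, so $\bar{y}$ is a valid action for the fractional learner.

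Next I would verify that for every round $t$, $\mathrm{FC}_{R_t}(\bar{y}) \leq \mathrm{C}_{R_t}(F^\ast)$ by constructing an explicit feasible solution of the convex program in Definition~\ref{d:frac_cost}. For each client $j \in R_t$, let $i^\ast(j) \in \arg\min_{i \in F^\ast} d_{ij}$ and set $\bar{x}_{i^\ast(j),\, j} = 1$ and $\bar{x}_{ij} = 0$ for all other $i$. Feasibility is immediate: $\sum_i \bar{x}_{ij} = 1$, $\bar{x}_{ij} \geq 0$, and whenever $\bar{x}_{ij} = 1$ we have $i \in F^\ast$ and thus $\bar{y}_i = 1 \geq \bar{x}_{ij}$. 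The corresponding $\beta_j$ equals $d(j, F^\ast)$, so the objective value attained by $(\bar{x}, \bar{y})$ is exactly $\left(\sum_{j \in R_t} d(j,F^\ast)^p\right)^{1/p} = \mathrm{C}_{R_t}(F^\ast)$. Since $\mathrm{FC}_{R_t}(\bar{y})$ is defined as the minimum over all feasible $x$ for this $\bar{y}$, we get $\mathrm{FC}_{R_t}(\bar{y}) \leq \mathrm{C}_{R_t}(F^\ast)$.

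Summing over $t = 1, \dots, T$ yields $\sum_{t=1}^T \mathrm{FC}_{R_t}(\bar{y}) \leq \sum_{t=1}^T \mathrm{C}_{R_t}(F^\ast)$, and since $\bar{y} \in \Delta_n^k$, the minimum over $y^\ast \in \Delta_n^k$ on the left-hand side is at most this value. Taking the infimum over integral $F^\ast$ with $|F^\ast|=k$ on the right gives the claim. The argument is essentially a one-line integrality embedding, and I do not anticipate any real obstacle; the only minor subtlety is remembering to handle the $p = \infty$ case, for which one replaces $(\sum_j \beta_j^p)^{1/p}$ with $\max_j \beta_j$ in the convex program but the assignment-based feasible solution still attains value $\max_{j \in R_t} d(j, F^\ast) = \mathrm{C}_{R_t}(F^\ast)$, so the same chain of inequalities goes through.
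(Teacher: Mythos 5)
Your proof is correct and takes the same approach the paper sketches: embed the optimal integral placement $F^\ast$ as its indicator vector $\bar{y}\in\Delta_n^k$ and show via an explicit feasible assignment $\bar{x}$ that $\mathrm{FC}_{R_t}(\bar{y})\leq C_{R_t}(F^\ast)$. The paper only states this as a one-sentence remark preceding the lemma (that $\mathrm{FC}_R(y)=C_R(y)$ for integral $y$), so your write-up simply makes the required $\leq$ direction explicit and correctly notes the $p=\infty$ adaptation.
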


Lemma~\ref{l:frac_int} will be used in the next sections where the online learning algorithms for the original Dynamic $k$-Clustering are presented. To this end, we  dedicate the rest of this section to design a polynomial time no-regret algorithm for Fractional Dynamic $k$-Clustering. A key step towards this direction is the use of the subgradient vectors of $\mathrm{FC}_{R_t}(\cdot)$.
\begin{definition}[Subgradient]\label{d:subgradients}
Given a function $f:\mathbb{R}^n \mapsto \mathbb{R}$, a vector $g \in \mathbb{R}^n$ belongs in the subgradient of $f$ at point $x\in \mathbb{R}^n$,$g \in \partial f(x)$, if and only if $f(y) \geq f(x) + g^\top (y -x)~$, for all $y \in \mathbb{R}^n$.
\end{definition}
Computing the subgradient vectors of functions, as complicated as $\mathrm{FC}_{R_t}(\cdot)$, is in general a computationally hard task. One of our main technical contributions consists in showing that the latter can be done through the solution of an adequate convex program corresponding to the dual of the convex program of Definition~\ref{d:frac_cost}.  
\begin{lemma}\label{l:dual}
Consider the convex program of Definition~\ref{d:frac_cost} formulated with respect to a vector $y \in  \Delta_n^k$ and the clients' positions $R$. Then the following convex program is its dual.
\begin{equation}\label{eq:ALP}
\begin{array}{lr@{}ll}
\mbox{\emph{maximize}}~~~ \sum_{j \in R}A_j - \sum_{i \in V}\sum_{j \in R}k_{ij}\cdot y_i\\
\\
\mathrm{s.t.}~~~~ ||\lambda||_{p}^\ast \leq 1 \\
~~~~~~~~~  d_{ij} \cdot  \lambda_j + k_{ij} \geq A_j \,\,~~~~\forall i \in V, j \in R\\
~~~~~~~~~~k_{ij} \geq 0 \,\,~~~~~~~~~~~~~~~~~~~~~~~\forall i \in V, j \in R\\
\end{array}
\end{equation}
where $|| \cdot||_{p}^\ast$ is the dual norm of $||\cdot ||_p$
\end{lemma}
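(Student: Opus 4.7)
The plan is to derive the dual in two stages: first linearize the $p$-norm objective via dual-norm duality, then form the Lagrangian of the resulting bilinear program.

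First I would apply the variational characterization $\|\beta\|_p = \max_{\|\lambda\|_p^\ast \leq 1} \lambda^\top \beta$ and substitute $\beta_j = \sum_{i} d_{ij} x_{ij}$ to rewrite the primal as
\[\min_{x \in \mathcal{X}(y,R)}\ \max_{\|\lambda\|_p^\ast \leq 1}\ \sum_{i \in V}\sum_{j \in R} d_{ij}\,\lambda_j\, x_{ij},\]
where $\mathcal{X}(y,R)=\{x\geq 0:\ \sum_i x_{ij}=1\ \forall j\in R,\ x_{ij}\leq y_i\ \forall i\in V,\,j\in R\}$ is a nonempty (since $\sum_i y_i = k\geq 1$) compact convex polytope. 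The inner expression is bilinear in $(x,\lambda)$ and the dual-norm ball $\{\lambda:\|\lambda\|_p^\ast\leq 1\}$ is compact and convex, so Sion's minimax theorem lets me swap the order to $\max_\lambda \min_x$.

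Next, for each fixed $\lambda$ the inner minimization is a linear program in $x$. I would introduce sign-unconstrained multipliers $A_j$ for the equalities $\sum_i x_{ij}=1$ and nonnegative multipliers $k_{ij}\geq 0$ for the inequalities $x_{ij}\leq y_i$, obtaining the Lagrangian
\[\sum_{j \in R} A_j\ -\ \sum_{i,j} k_{ij}\, y_i\ +\ \sum_{i,j}\bigl(d_{ij}\lambda_j + k_{ij} - A_j\bigr)\, x_{ij}.\]
For the infimum over $x\geq 0$ to be finite one needs $d_{ij}\lambda_j + k_{ij} \geq A_j$ for every $i\in V,\ j\in R$; when this holds the inner minimum equals the constant term $\sum_j A_j - \sum_{i,j} k_{ij} y_i$. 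Combining with the outer maximization over $\lambda$ in the dual-norm ball yields exactly the program~\eqref{eq:ALP}.

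The only real subtlety is the minimax step, which Sion's theorem covers thanks to compactness of the $\lambda$-ball and bilinearity of the payoff; the subsequent LP duality is entirely standard, so I do not anticipate further obstacles.
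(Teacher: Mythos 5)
Your proof is correct, but it takes a genuinely different route from the paper's. The paper forms the Lagrangian of the full convex program in Definition~2 directly, introducing $\lambda_j$ as the multiplier for the equality $\beta_j = \sum_i d_{ij}x_{ij}$ (and $A_j$, $k_{ij}$, $\mu_{ij}$ for the remaining constraints), then rearranges and observes that the infimum over the primal variables is finite only if $d_{ij}\lambda_j + k_{ij} - A_j = \mu_{ij} \geq 0$ and if $\left(\sum_j \beta_j^p\right)^{1/p} - \sum_j \lambda_j\beta_j$ is bounded below, which forces $\|\lambda\|_p^\ast \leq 1$; strong duality for the convex program then gives the result. You instead eliminate the $\beta$-variables up front by writing $\|\beta\|_p$ via its dual-norm variational representation, reduce to a min--max of a bilinear form over two compact convex sets, invoke Sion's minimax theorem to swap the order, and then apply ordinary LP strong duality to the inner problem in $x$ for each fixed $\lambda$. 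The net effect is the same: $\lambda$ ends up constrained to the dual-norm ball, $A_j$ and $k_{ij}$ are the LP multipliers, and the dual objective and constraints match \eqref{eq:ALP}. What your decomposition buys is that each step (dual-norm identity, Sion, LP duality) has easily checkable hypotheses, whereas the paper's one-shot Lagrangian implicitly appeals to strong duality for a convex program with a norm objective, which requires (but does not spell out) a constraint-qualification argument; conversely, the paper's version is shorter and keeps all multipliers visible in a single Lagrangian, which makes the interpretation of $k_{ij}$ as a subgradient component (used later in Lemma~4) somewhat more immediate. One small point of care in your write-up: when you say ``the inner minimum equals the constant term'' you really mean the Lagrangian infimum over $x \geq 0$; it then equals the inner LP's optimum only after maximizing over $(A,k)$ and invoking LP strong duality, which you do appeal to, so the logic is sound even if the phrasing is slightly loose.
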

In the following lemma we establish the fact that a subgradient vector of $\partial \mathrm{FC}_{R_t}(\cdot)$ can be computed through the optimal solution of the convex program in Lemma~\ref{l:dual}.
\begin{lemma}\label{l:subgradients}
Let $k_{ij}^\ast$ denote the value of the variables $k_{ij}$ in the optimal solution of the convex program in Lemma~\ref{l:dual} formulated with respect to vector $y \in \Delta_n^k$ and the clients' positions $R$. Then for any vector $y' \in \Delta_n^k$,
\[\mathrm{FC}_{R_t}(y') \geq \mathrm{FC}_{R_t}(y) + \sum_{i \in V} \left(-\sum_{j \in R}k_{ij}^{\ast} \right)\cdot \left( y_i' - y_i \right) \]
Moreover there exits an $\Theta(r \cdot |V|)$ algorithm for solving the dual program (Algorithm~\ref{alg:dual}) and additionally $|k_{ij}^\ast| \leq D$. 
\end{lemma}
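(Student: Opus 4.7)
The plan is to exploit convex duality between the primal program of Definition~\ref{d:frac_cost} and the dual \eqref{eq:ALP} from Lemma~\ref{l:dual}. The essential structural observation is that, in the dual, the parameter $y$ enters only through the linear objective term $-\sum_{i\in V}y_i\sum_{j\in R}k_{ij}$, whereas the dual feasible region is completely independent of $y$. This is exactly the setup in which the standard envelope-style argument produces a subgradient for free, since reusing any dual feasible solution certified at $y$ yields a lower bound on $\mathrm{FC}_R(y')$ for every $y'$.

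First I would invoke strong duality for the primal. Since its constraints are all linear and the $p$-norm objective is convex, and since $x_{ij}=y_i/k$ is trivially primal-feasible for any $y\in\Delta_n^k$ with $k\ge 1$ (using $\sum_i x_{ij}=1$ and $x_{ij}\le y_i$), Lemma~\ref{l:dual} supplies dual optima $(A^\ast,\lambda^\ast,k^\ast)$ with
\[
\mathrm{FC}_R(y)=\sum_{j\in R}A_j^\ast-\sum_{i\in V}y_i\sum_{j\in R}k_{ij}^\ast.
\]
For any $y'\in\Delta_n^k$, the triple $(A^\ast,\lambda^\ast,k^\ast)$ remains dual-feasible (its feasibility does not involve $y$), so weak duality at $y'$ yields
\[
\mathrm{FC}_R(y')\ge\sum_{j\in R}A_j^\ast-\sum_{i\in V}y'_i\sum_{j\in R}k_{ij}^\ast.
\]
Subtracting the equality at $y$ from this inequality gives exactly the subgradient bound claimed in Lemma~\ref{l:subgradients}.

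For the magnitude estimate $|k_{ij}^\ast|\le D$, I would first note that at an optimum one may take $k_{ij}^\ast=\max(0,A_j^\ast-d_{ij}\lambda_j^\ast)$, because $k_{ij}$ enters the maximized dual objective with non-positive coefficient $-y_i$, so any slack can be removed while preserving feasibility. One may also assume $\lambda_j^\ast\ge 0$, since replacing $\lambda_j^\ast$ by $|\lambda_j^\ast|$ preserves $\|\lambda^\ast\|_p^\ast\le 1$ and only loosens the inequality constraints. Then for each $j$ there is some $i^\ast$ with $k_{i^\ast j}^\ast=0$: otherwise, simultaneously reducing $A_j^\ast$ and every $k_{ij}^\ast$ by the same $\varepsilon>0$ changes the objective by $\varepsilon(k-1)\ge 0$ (using $\sum_i y_i=k\ge 1$) while keeping all constraints satisfied, so we may push the smallest $k_{ij}^\ast$ to zero. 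Using such an $i^\ast$ and the dual-norm bound $|\lambda_j^\ast|\le\|\lambda^\ast\|_q\le 1$ gives $A_j^\ast\le d_{i^\ast j}\lambda_j^\ast\le D$, and hence $0\le k_{ij}^\ast\le A_j^\ast\le D$.

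For the $\Theta(r|V|)$ algorithm, I would exploit the separability of the dual objective. For fixed $\lambda$, the maximization over $(A,k)$ decomposes client-by-client into a one-dimensional concave piecewise-linear optimization in $A_j$ whose breakpoints are $\{d_{ij}\lambda_j\}_{i\in V}$ and whose optimum can be read off in $O(|V|)$ time after sorting these values. The outer maximization over $\lambda$ with $\|\lambda\|_p^\ast\le 1$ reduces to a closed-form scaling once the ``active'' breakpoint per client is fixed, which I would formalize in Algorithm~\ref{alg:dual}. The main obstacle I expect is the complementary-slackness tightening behind $|k_{ij}^\ast|\le D$: the envelope inequality itself is a clean one-line consequence of strong duality, but selecting a specific optimum with small $k^\ast$ requires care in the boundary case $k=1$, where the ``reduce $A_j$ and $k_{ij}$ together'' move is objective-preserving rather than strictly improving and must be invoked as a WLOG normalization.
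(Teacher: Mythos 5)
Your envelope argument for the subgradient inequality is exactly the paper's: fix the dual optimum $(A^\ast,\lambda^\ast,k^\ast)$ at $y$, observe that the dual feasible region is $y$-independent, and combine strong duality at $y$ with weak duality at $y'$. (The paper phrases the weak-duality step as optimality of the dual optimum at $y'$ over the still-feasible $(A^\ast,\lambda^\ast,k^\ast)$, which is the same inequality.) Your explicit Slater-type check via $x_{ij}=y_i/k$ is a small, harmless addition the paper omits.

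Where you genuinely diverge is the bound $|k_{ij}^\ast|\le D$. The paper proves this by \emph{construction}: Algorithm~\ref{alg:dual} builds a greedy primal solution (fill $x_{ij}$ by increasing $d_{ij}$), reads off the matching dual $\lambda_j=(\beta_j/\|\beta\|_p)^{p-1}$, $A_j=\lambda_j D_j$, $k_{ij}=\lambda_j\frac{x_{ij}}{y_i}(D_j-d_{ij})$, verifies feasibility and equality of objectives, and then the bound $|k_{ij}|\le D$ falls out directly from $|\lambda_j|\le 1$, $x_{ij}/y_i\le 1$, and $|D_j-d_{ij}|\le D$. You instead argue \emph{existentially}: normalize an arbitrary dual optimum so that $k_{ij}^\ast=\max(0,A_j^\ast-d_{ij}\lambda_j^\ast)$ and $\lambda_j^\ast\ge 0$, then use the joint translation $A_j\mapsto A_j-\varepsilon$, $k_{ij}\mapsto k_{ij}-\varepsilon$ (which increases the objective by $\varepsilon(k-1)\ge 0$) to force some $k_{i^\ast j}^\ast=0$, giving $A_j^\ast\le d_{i^\ast j}\lambda_j^\ast\le D$ and hence $k_{ij}^\ast\le D$. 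That reasoning is correct and more "LP-structural" than the paper's; what the paper's constructive route buys you, and what your existential argument does not by itself deliver, is that the \emph{specific} output of Algorithm~\ref{alg:dual}, which is what is actually fed into the mirror-descent gradient $g_i^t=-\sum_j k_{ij}^t$ in Algorithm~\ref{alg:frac_no_regret}, satisfies the bound. If you go the normalization route, you should state that Algorithm~\ref{alg:dual} is designed to output precisely such a normalized optimum (which it does). Your sketch of the $\Theta(r|V|)$ solver, client-separable inner optimization over $(A_j,k_{\cdot j})$ with breakpoints $d_{ij}\lambda_j$, then outer closed-form scaling of $\lambda$, is the correct high-level picture and is in fact equivalent to the paper's greedy-fill-then-dualize procedure; it is left as a sketch, as you acknowledge, whereas the paper writes it out.
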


\begin{algorithm}[H]
  \caption{A time-efficient algorithm for solving the dual program of Lemma~\ref{l:dual}}
  \begin{algorithmic}[1]
  \State\textbf{Input:} A vector $y \in \Delta_{n}^k$ and a set of clients $R \subseteq V$.
  
  \State \textbf{Output:} An optimal solution for the convex program of Lemma~\ref{l:dual}.

  \For{ each client $j \in R$,}
  \State Sort the nodes $i \in V$ in increasing order according to $d_{ij}$. 
  
    \State $\mathrm{Rem}  \leftarrow 1$  
  
   \For{each each $i \in V$}
        \State $x_{ij} \leftarrow \min(y_i , \mathrm{Rem})$.

        \State $\mathrm{Rem} \leftarrow \mathrm{Rem} - x_{ij}$.
   \EndFor
  \EndFor

  \For{ each client $j \in R$}
  \State $V_j^+ \leftarrow \{i \in V:~ x_{ij} > 0\}$  and $D_j \leftarrow \max_{i \in V_{j}^+} d_{ij}$. 
  
    \State $\beta_j \leftarrow \sum_{i \in V}d_{ij} \cdot x_{ij}$

  \State $\lambda_j \leftarrow \left[ \frac{\beta_j}{||\beta||_p} \right]^{p-1}$
  
  \State $A_j \leftarrow \lambda_j \cdot D_j$
  
  \State $k_{ij} \leftarrow \min \left(\lambda_j\cdot \frac{x_{ij}}{y_i} \cdot \left(D_j - d_{ij}\right) , 0 \right)$
  
  \EndFor
  \end{algorithmic}
  \label{alg:dual}
\end{algorithm}
\begin{remark}
Algorithm~\ref{alg:dual} is not only a computationally efficient way to solve the convex program of Lemma~\ref{l:dual}, but most importantly guarantees that the value $k_{ij}^\ast$ are bounded by $D$ (this is formally Stated and proven in Lemma~\ref{l:dual}). The latter property is crucial for developing the no-regret algorithm for Fractional Dynamic $k$-Clustering. 
\end{remark}
Up next we present the no-regret algorithm for Fractional Dynamic $k$-Clustering.

\begin{algorithm}[H]
  \caption{A no-regret algorithm for Fractional Dynamic $k$-Clustering}
  \begin{algorithmic}[1]
  \State Initially, the learner selects $y^1_i = k/n$ for all $i \in V$.
  \For{ rounds $t = 1 \cdots T$}
    \State The learner selects $y_t \in \Delta_{n}^k$.
  \State The adversary selects the positions of the clients $R_t \subseteq V$.
  \State The learner receives cost, $\mathrm{FC}_{R_t}(y_t)$.
  
  \State The learner runs Algorithm~\ref{alg:dual} with input $y_t$ and $R_t$ and sets $g_i^t = -\sum_{ j \in R_t}k_{ij}^t$
  
  \For{ each $i \in V$}

\State
  \[y_i^{t+1} = \frac{ y_i^t \cdot e^{-\epsilon g_i^t}}{\sum_{i\in V}y_i^t \cdot e^{- \epsilon g_i^t}}\]
  where $\epsilon =\frac{\sqrt{ \log n}}{D r \sqrt{T}}$
  \EndFor
\EndFor
  \end{algorithmic}
  \label{alg:frac_no_regret}
\end{algorithm}

We conclude the section with Theorem~\ref{t:no-regret-frac} that establishes the no-regret property of Algorithm~\ref{alg:frac_no_regret} and the proof of which is deferred to the Appendix~\ref{app:fractional}. 

\begin{theorem}\label{t:no-regret-frac}
Let $y_1,\ldots,y_T$ be the sequence of vectors in $\Delta_n^k$ produced by Algorithm~\ref{alg:frac_no_regret} for the clients' positions $R_1,\ldots,R_T$. Then,
\[\sum_{t=1}^T\mathrm{FC}_{R_t}(y_t) \leq  \min_{y^\ast \in \Delta_k} \sum_{t=1}^T\mathrm{FC}_{R_t}(y^\ast) + \Theta \left (k D n \sqrt{\log n  T} \right)\]
\end{theorem}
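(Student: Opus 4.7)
My approach is to recast Fractional Dynamic $k$-Clustering as an online convex optimization problem on the scaled simplex $\Delta_n^k$ and then apply the standard exponentiated-gradient (Hedge) analysis, using Lemma~\ref{l:subgradients} as a subgradient oracle. First, I would verify that $\mathrm{FC}_{R_t}(\cdot)$ is convex in $y$: this is immediate because it is the optimal value of a convex minimization whose constraints $x_{ij}\le y_i$ are jointly linear in $(x,y)$, so partial minimization over $x$ preserves convexity in $y$. Convexity combined with Lemma~\ref{l:subgradients} then linearizes the regret, giving
$$\sum_{t=1}^T \mathrm{FC}_{R_t}(y_t) - \sum_{t=1}^T \mathrm{FC}_{R_t}(y^\ast) \;\le\; \sum_{t=1}^T \langle g^t, y_t - y^\ast \rangle,$$
where $g^t \in \R^n$ has coordinates $g_i^t = -\sum_{j\in R_t} k_{ij}^{t,\ast}$. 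It thus suffices to bound the linear regret of the update rule in Algorithm~\ref{alg:frac_no_regret} against these loss vectors.

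Next, I would carry out the standard Hedge analysis using the relative entropy $D_{KL}(y^\ast \| y^t)$ as the potential, initialized uniformly at $y^1=(k/n)\mathbf{1}$. Two quantitative facts drive the bound. First, for any $y^\ast \in \Delta_n^k$ one has $D_{KL}(y^\ast \| y^1) \le k\log n$, which I would verify by a direct calculation (the maximum being attained when $y^\ast$ concentrates on a single coordinate). Second, the coordinate bound $|k_{ij}^\ast|\le D$ guaranteed by Lemma~\ref{l:subgradients} yields $\|g^t\|_\infty \le rD$. Plugging these into the standard one-step drop inequality (obtained from $e^{-x}\le 1-x+x^2$, adapted to the scaled simplex) and telescoping gives
$$\sum_{t=1}^T \langle g^t, y_t - y^\ast \rangle \;\le\; \frac{k\log n}{\epsilon} + k\epsilon\sum_{t=1}^T \|g^t\|_\infty^2 \;\le\; \frac{k\log n}{\epsilon} + k\epsilon T(rD)^2.$$
The prescribed step size $\epsilon=\sqrt{\log n}/(Dr\sqrt{T})$ balances these two terms and delivers a bound of order $kDr\sqrt{T\log n}$, from which the stated $\Theta(kDn\sqrt{T\log n})$ follows by the trivial inequality $r\le n$.

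The main obstacle is carrying out the Hedge drop estimate over the scaled simplex $\Delta_n^k$ rather than the probability simplex; this is routine but one must track the extra factor of $k$ both in the KL ``diameter'' and in the second-order term, and identify the multiplicative update in Algorithm~\ref{alg:frac_no_regret} with the reparameterization $y_t = k\, p_t$ of the ordinary Hedge iterate on $\Delta_n^1$. The essential nontrivial input the plan relies on is the boundedness $|k_{ij}^\ast|\le D$ from Lemma~\ref{l:subgradients}: without it, $\|g^t\|_\infty$ could not be uniformly controlled and the standard tuning of $\epsilon$ would collapse. Once this is in place, the rest of the proof is an optimization of $\epsilon$ and an application of the subgradient inequality, both routine.
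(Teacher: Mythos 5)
Your proof is correct and follows essentially the same route as the paper: linearize via the subgradient inequality from Lemma~\ref{l:subgradients}, bound $\|g^t\|_\infty \le rD$, and apply the exponentiated-gradient regret bound on the scaled simplex $\Delta_n^k$ with the step size from Algorithm~\ref{alg:frac_no_regret}, finishing with $r\le n$. The only cosmetic difference is that the paper invokes Theorem~1.5 of \cite{H16} for the linear-regret step, whereas you carry out the Hedge/KL-potential calculation explicitly (correctly tracking the extra factor of $k$ from the scaled simplex).
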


\section{A $\Theta(k)$-Regret Deterministic Online Learning Algorithm}\label{s:det}
In this section we show how one can use Algorithm~\ref{alg:frac_no_regret} described in Section~\ref{s:fractional} to derive $\Theta(k)$-regret for the Dynamic $k$-Clustering in polynomial-time.

The basic idea is to use a rounding scheme that given a vector $y\in \Delta_n^k$ produces a placement of the $k$ centers $F_y \subseteq V$ (with $|F_y| \leq k$) such that \textit{for any set of clients' positions $R$}, the 
connection cost $C_{R}(F_y)$ is approximately bounded by the factional connection cost $\mathrm{FC}_{R}(y)$. This rounding scheme is described in Algorithm~\ref{alg:rounding}.

\begin{algorithm}
  \caption{Deterministic Rounding Scheme}
  \begin{algorithmic}[1]
  \State \textbf{Input}: A vector $y \in \Delta_{n}^k$.
 \State \textbf{Output}: A set $F_y \subseteq V$ at which centers are opened.

  \State Run Algorithm~\ref{alg:dual} with input $y$ and $R = V$.
  
  \State Sort the positions $i \in V$ according to the values $\beta_i$ produced by Algorithm~\ref{alg:dual}.
  \State $F_y \leftarrow \emptyset$
 
  \For{ $i = 1$ \bfseries{ to } $V$}
  \If{ $\min _{j \in F_y}d_{ij} > 6k \cdot \beta_i$}
    \State $F_y \leftarrow F_y \cup \{i\}$
    \EndIf
      \EndFor
  \end{algorithmic}
  \label{alg:rounding}
\end{algorithm}

\begin{lemma}\label{l:rounding_lemma}[Rounding Lemma] Let $F_y$ denote the positions of the centers produced by Algorithm~\ref{alg:rounding} for input $y \in \Delta_n^k$. Then the following properties hold,

\begin{itemize}
    \item For any set of clients $R$,
    \[\mathrm{C}_R(F_y)~\leq 6k \cdot \mathrm{FC}_{R}(y)\]
    
    \item The cardinality of $\mathrm{F}_y$ is at most $k$, $|\mathrm{F}_y| \leq k$.
\end{itemize}
\end{lemma}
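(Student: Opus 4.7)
The plan is to exploit the values $\beta_j = \sum_{i\in V} d_{ij} x_{ij}$ produced by Algorithm~\ref{alg:dual} (invoked with $R = V$) as per-point radii that simultaneously control the connection cost and the size of $F_y$. A preliminary observation is that Algorithm~\ref{alg:dual} fills each client's unit of $x$-mass greedily, from its closest positions $i$ and only up to the capacities $y_i$, which is exactly the minimizer of the per-client LP $\min\sum_i d_{ij} x_{ij}$ subject to $\sum_i x_{ij} = 1$ and $0 \leq x_{ij} \leq y_i$. Since the primal objective in Definition~\ref{d:frac_cost} is coordinatewise nondecreasing in $\beta_j$ and the constraints decouple across clients, this per-client minimum is also jointly optimal, so $\mathrm{FC}_R(y) = \|(\beta_j)_{j\in R}\|_p$ for every $R \subseteq V$.

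For the first bullet I would prove the pointwise inequality $d(j,F_y) \leq 6k\beta_j$ for every $j \in V$. When the greedy sweep (iterating $V$ in increasing $\beta$) reaches $j$, either $j$ is added to $F_y$, in which case $d(j,F_y) = 0$, or the selection rule fails because some already-chosen $i \in F_y$ satisfies $d_{ij} \leq 6k\beta_j$; since $F_y$ only grows, this bound persists to the end of the algorithm. By monotonicity of the $p$-norm,
\[\mathrm{C}_R(F_y) \;=\; \|(d(j,F_y))_{j\in R}\|_p \;\leq\; 6k\,\|(\beta_j)_{j\in R}\|_p \;=\; 6k \cdot \mathrm{FC}_R(y).\]

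For the second bullet I would combine a Markov-style local-mass bound with a disjoint-balls packing argument. Treating $(x_{ij})_{i\in V}$ as a probability distribution with mean distance $\beta_j$ from $j$, Markov's inequality gives $\sum_{i:\,d_{ij} < 3k\beta_j} x_{ij} \geq 1 - \tfrac{1}{3k}$, and the capacity constraint $x_{ij} \leq y_i$ upgrades this to $\sum_{i \in B(j,\,3k\beta_j)} y_i \geq 1 - \tfrac{1}{3k}$. For any two distinct centers $u, v \in F_y$ with $\beta_u \leq \beta_v$ (so $u$ was added first), the greedy rule enforced $d_{uv} > 6k\beta_v \geq 3k\beta_u + 3k\beta_v$, which by the triangle inequality makes the balls $B(u, 3k\beta_u)$ and $B(v, 3k\beta_v)$ disjoint. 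Summing the $y$-mass lower bound over the $|F_y|$ disjoint balls and using $\sum_{i\in V} y_i = k$ yields $|F_y|\,(1 - \tfrac{1}{3k}) \leq k$, hence $|F_y| \leq 3k^2/(3k-1) < k+1$ and therefore $|F_y| \leq k$. The main technical point is the tight constant $6k$ in the selection rule: it is precisely the value that lets the Markov threshold be taken as $3k$, which in turn makes each ball's mass close enough to $1$ that the packing bound lands at $k$ rather than at a constant multiple of $k$, and getting this cancellation right is where the two bullets have to be balanced simultaneously.
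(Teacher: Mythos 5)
Your proof is correct and follows essentially the same route as the paper's: the pointwise bound $d(j,F_y)\le 6k\beta_j$ from the greedy selection rule gives the first bullet via $p$-norm monotonicity, and the second bullet comes from the Markov-style mass bound on the balls $B(j,3k\beta_j)$ together with their pairwise disjointness (forced by the $6k$ threshold and the triangle inequality) and the budget $\sum_i y_i = k$. The one thing you spell out more explicitly than the paper is the preliminary observation that the $\beta_j$'s computed by Algorithm~\ref{alg:dual} with $R=V$ are per-client optimal and decouple, so $\mathrm{FC}_R(y)=\|(\beta_j)_{j\in R}\|_p$ for every $R$; the paper uses this implicitly when it identifies $6k\bigl(\sum_{j\in R}\beta_j^{\ast p}\bigr)^{1/p}$ with $6k\cdot \mathrm{FC}_R(y)$, and making it explicit is a genuine clarity improvement.
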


Up next we show how the deterministic rounding scheme described in Algorithm~\ref{alg:rounding} can be combined with Algorithm~\ref{alg:frac_no_regret} to produce an $\Theta(k)$-regret deterministic online learning algorithm that runs in polynomial-time. The overall online learning algorithm is described in Algorithm~\ref{alg:det} and its regret bound is formally Stated and proven in Theorem~\ref{t:det-regret}.

\begin{algorithm}[H]
  \caption{A $\Theta(k)$-regret deterministic online learning algorithm
  for Dynamic $k$-Clustering}
  \label{alg:det}
  
  \begin{algorithmic}[1]
  \For{ rounds $t = 1 \cdots T$}
  
  \State The learner computes the vector $y_t \in \Delta_{n}^k$ by running Algorithm~\ref{alg:frac_no_regret} for the sequence of clients' positions $(R_1,\ldots,R_{t-1})$.
  
  \State The learner places centers to the positions $F_{y_t}$ produced by Algorithm~\ref{alg:rounding} given input $y_t$.
  
  \State The adversary selects the clients' positions $R_t \subseteq V$.
  
   \State The learner suffers connection cost $C_{R_t}(F_{y_t})$

  \EndFor
  \end{algorithmic}
  \label{alg:det}
\end{algorithm}

We conclude the section with the proof of Theorem~\ref{t:det-regret} in which the regret bounds of Algorithm~\ref{alg:det} are established.
\begin{proof}[Proof of Theorem~\ref{t:det-regret}]
The second case of Lemma~\ref{l:rounding_lemma} ensures that $|F_{t}|\leq k$ and thus Algorithm~\ref{alg:det} opens at most $k$ facilities at each round. Applying the first case of Lemma~\ref{l:rounding_lemma} for $R=R_t$ we get that $C_{R_t}(F_t)\leq 6k \cdot \mathrm{FC}_{R_t}(y_t)$. As a result,
\begin{eqnarray*}
&&\sum_{t=1}^T C_{R_t}(F_t) \leq \sum_{t=1}^T 6k \cdot \mathrm{FC}_{R_t}(y_t)\\
&&\leq 6k \min_{y^\ast \in \Delta_k}
\sum_{t=1}^T \mathrm{FC}_{R_t}(y^\ast) + \Theta \left (k D n \sqrt{\log n  T} \right)
\end{eqnarray*}
where the last inequality follows by Theorem~\ref{t:no-regret-frac}. However Lemma~\ref{l:frac_int} ensures that \[\min_{y^\ast \in \Delta_k} \sum_{t=1}^T \mathrm{FC}_{R_t}(y^\ast) \leq \min_{F^\ast: |F^\ast|=k} \sum_{t=1}^T \mathrm{C}_{R_t}(F^\ast)\]
\end{proof}


\section{A \textbf{$\Theta(r)$}-Regret Randomized Online Learning Algorithm}\label{s:rand}

In this section we present a $\Theta(r)$-regret
randomized online learning algorithm.
This algorithm is described in Algorithm~\ref{alg:rand} and is based on the randomized rounding developed by Charikar and Li for the $k$-median problem \cite{CL12}.

\begin{lemma}[\cite{CL12}]\label{l:Charikar-Lin}
There exists a polynomial-time randomized rounding scheme that given a vector $y \in \Delta_n^k$
produces a probability distribution, denoted as $\mathrm{CL}(y)$, over the subsets of $V$ such that,
\begin{enumerate}
    \item with probability $1$ exactly $k$ facilities are opened, $\mathbb{P}_{F \sim \mathrm{CL}(y)}\left[|F| = k\right] = 1$.
    
    \item for any position $j \in V$,
    \[\mathbb{E}_{F \sim \mathrm{CL}(y)}\left[C_{\{j\}}(F_y)  \right] \leq 4 \cdot \mathrm{FC}_{\{j\}}(y).\]
\end{enumerate}
\end{lemma}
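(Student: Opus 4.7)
The plan is to reconstruct the Charikar--Li rounding \cite{CL12}. The starting point is the fractional assignment $x_{ij}$ that implicitly accompanies $y$: for each $j \in V$, Algorithm~\ref{alg:dual} already produces a vector $x_{\cdot j}$ that fills the $y$-capacities along the closest facilities to $j$, and the quantity $d_{\mathrm{av}}(j) := \sum_i d_{ij} x_{ij}$ coincides with $\mathrm{FC}_{\{j\}}(y)$. The whole scheme is organized around comparing the random distance from $j$ to the open set $F$ with its ``target'' value $d_{\mathrm{av}}(j)$.

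Phase one is a consolidation step. I would scan the positions $j \in V$ in increasing order of $d_{\mathrm{av}}(j)$ and greedily build a set $C \subseteq V$ of \emph{bundle centers}, inserting $j$ into $C$ iff no previously selected $j' \in C$ satisfies $d(j,j') \le \alpha \, d_{\mathrm{av}}(j)$ for a suitably chosen separation constant $\alpha$. For each $j \in C$ form a bundle $B_j$ out of the closest facilities to $j$, splitting at most one facility, so that $\sum_{i \in B_j} y_i = 1$; the greedy separation guarantees the bundles are pairwise disjoint. Phase two is the dependent rounding: within each bundle $B_j$ open exactly one facility, choosing $i \in B_j$ with probability $y_i$; on the residual facilities outside $\bigcup_j B_j$, whose total $y$-mass is the integer $k - |C|$, apply standard pipage/dependent rounding to open exactly $k - |C|$ additional facilities while preserving marginals. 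This delivers $|F|=k$ almost surely and $\Pr[i \in F] = y_i$ for every $i$.

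For the per-client cost bound I would split on whether $j \in C$ or not. When $j \in C$, the bundle $B_j$ collects the closest facilities to $j$ totalling $y$-mass $1$, and a Markov-type estimate together with the definition of $d_{\mathrm{av}}$ yields $\mathbb{E}[d(j,F)] = \sum_{i \in B_j} y_i d_{ij} = O(d_{\mathrm{av}}(j))$. When $j \notin C$, the greedy rule guarantees the existence of $j^\ast \in C$ with $d(j,j^\ast) \le \alpha \, d_{\mathrm{av}}(j)$ and (by the sorted scan) $d_{\mathrm{av}}(j^\ast) \le d_{\mathrm{av}}(j)$, so the triangle inequality gives $\mathbb{E}[d(j,F)] \le d(j,j^\ast) + \mathbb{E}[d(j^\ast,F)] = O(d_{\mathrm{av}}(j))$. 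A careful balancing of $\alpha$ against the Markov threshold used inside bundles drives the hidden constant down to $4$.

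The hardest ingredient is the dependent rounding, which must simultaneously (i) open exactly one facility per bundle, (ii) preserve each per-facility marginal $y_i$, and (iii) guarantee $|F| = k$ deterministically. Pipage rounding on the partition-matroid polytope whose blocks are the bundles together with a residual ``sum-to-$k-|C|$'' block delivers all three, but one must verify that the initial fractional point actually lies in this polytope (a direct consequence of $\sum_i y_i = k$ and every bundle carrying mass exactly $1$) and that swap steps preserve marginals outside the bundles. Once the rounding guarantees these three properties, the per-client analysis above is a purely geometric triangle-inequality computation.
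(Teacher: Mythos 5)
The paper does not prove this lemma; it imports it verbatim from Charikar--Li \cite{CL12}, so there is no internal proof to compare against. Your reconstruction, however, deviates from the actual Charikar--Li scheme at the one point where the construction is genuinely delicate, and the deviation breaks the argument.

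The gap is in the sentence ``form a bundle $B_j$ out of the closest facilities to $j$, splitting at most one facility, so that $\sum_{i \in B_j} y_i = 1$; the greedy separation guarantees the bundles are pairwise disjoint.'' It does not. If you insist on filling each bundle up to mass exactly $1$, you have no control over the radius to which $B_j$ must extend: when the $y$-mass is spread thinly, $B_j$ can reach arbitrarily far past $j$, and two mass-$1$ bundles around well-separated centers can still share a distant facility. The separation rule $d(j,j') > \alpha\, d_{\mathrm{av}}(j)$ bounds the distance between \emph{centers}, not the radius of the bundles, so it gives you nothing here. The actual Charikar--Li construction sidesteps this precisely by truncating each bundle at the controlled radius $R_j = \tfrac{1}{2} d(j, C'\setminus\{j\})$: the truncation makes disjointness automatic, but then the bundle's $y$-mass is only guaranteed to be at least $1/2$ (via a Markov bound, using that $R_j \ge 2 d_{\mathrm{av}}(j)$ after the filtering step), not exactly $1$. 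Because the bundles are sub-unit, one cannot simply ``open exactly one per bundle''; Charikar--Li instead \emph{pair} the bundles by a greedy nearest-neighbour matching and run a dependent rounding that opens at least one facility in each matched pair with probability $1$, while preserving the marginal $\Pr[i\in F]=y_i$ and the cardinality $|F|=k$. It is this pairing trick --- not a residual pipage pass outside the bundles --- that guarantees every bundle center, and by the triangle inequality every other point, always has an open facility within $O(d_{\mathrm{av}})$; without it, a bundle center whose bundle happens to receive no facility in your scheme has no nearby fallback, and the expected distance bound collapses. So the missing ingredients are (i) truncating bundles at radius $R_j$ rather than at mass $1$, which is what actually delivers disjointness, and (ii) the bundle-pairing step, which is what rescues the per-client cost bound once the bundles are sub-unit. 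Your high-level architecture (filter by $d_{\mathrm{av}}$, bundle, dependently round, triangle inequality) is the right skeleton, but as written the construction is not well-defined and the constant-$4$ analysis does not go through.
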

Similarly with the previous section, combining the randomized rounding of Charikar-Li with Algorithm~$1$ produces a $\Theta(r)$-regret randomized online learning algorithm that runs in polynomial-time.

\begin{algorithm}[H]
  \caption{A $\Theta(r)$-regret randomized online learning algorithm}
  \label{alg:rand}
  
  \begin{algorithmic}[1]
  
  \For{ rounds $t = 1 \cdots T$}
  
  \State The learner computes the vector $y_t \in \Delta_{n}^k$ by running Algorithm~\ref{alg:frac_no_regret} for the sequence of clients' positions $(R_1,\ldots,R_{t-1})$.
  
  \State The learner places centers to the positions  $F_t \subseteq V$ produced by the Charikar-Li randomized rounding with input $y_t$, $F_t \sim \mathrm{CL}(y_t)$.
  
  \State The adversary selects a request $R_t \subseteq V$.
  
   \State The learner suffers connection cost $C_{R_t}(F_t)$

  \EndFor
  \end{algorithmic}
\end{algorithm}

The proof of Theorem~\ref{t:rand-regret} that establishes the regret bound of Algorithm~\ref{alg:rand} follows by Lemma~\ref{l:Charikar-Lin} and Theorem~\ref{t:no-regret-frac} and is deferred to the Appendix~\ref{app:rand}.
\section{Experimental Evaluations}\label{s:experiments}
In this section we evaluate the performance of our online learning algorithm against adversaries that select the positions of the clients according to time-evolving probability distributions. We remark that the
regret bounds established in Theorem~\ref{t:det-regret} and Theorem~\ref{t:rand-regret} hold even if the adversary \textit{maliciously} selects the positions of the clients at each round so as to maximize the connection cost. As a result, in case clients arrive according to some (unknown and possibly time-varying) probability distribution that does not depend on the algorithm's actions, we expect the regret of to be way smaller.

In this section we empirically evaluate the regret of Algorithm~\ref{alg:det} for Dynamic $k$-Clustering in case $p =\infty$. We assume that at each round $t$, $20$ clients arrive according to several static or time-varying two-dimensional probability distributions with support on the $[-1,1] \times [-1,1]$ square and the possible positions for the centers being the discretized grid with $\epsilon = 0.1$. In order to monitor the quality of the solutions produced by Algorithm~\ref{alg:det}, we compare the time-average connection cost of Algorithm~\ref{alg:det} with the time-average \textit{fractional connection cost} of Algorithm~\ref{alg:frac_no_regret}. Theorem~\ref{t:no-regret-frac} ensures that for $T=\Theta(k^2 D^2/\epsilon^2)$ the time-average fractional connection cost of Algorithm~\ref{alg:frac_no_regret} is at most $\epsilon$
greater than the time-average connection cost of the optimal static solution for Dynamic $k$-Clustering. In the following simulations we select $\epsilon = 0.1$ and track the ratio between the time-average cost of Algorithm~\ref{alg:det} and of Algorithm~\ref{alg:frac_no_regret} which acts as an upper bound on the regret.

\textbf{Uniform Square} In this case the $20$ clients arrive \textit{uniformly at random} in the $[-1,1] \times [-1,1]$ square. Figure~\ref{f:uniform_square} illustrates the solutions at which Algorithm~\ref{alg:det} converges for $k=2,3$ and $8$ as long as the achieved regret. 

\begin{figure}[!htb]
\centering
  {\includegraphics[width=0.49\linewidth]{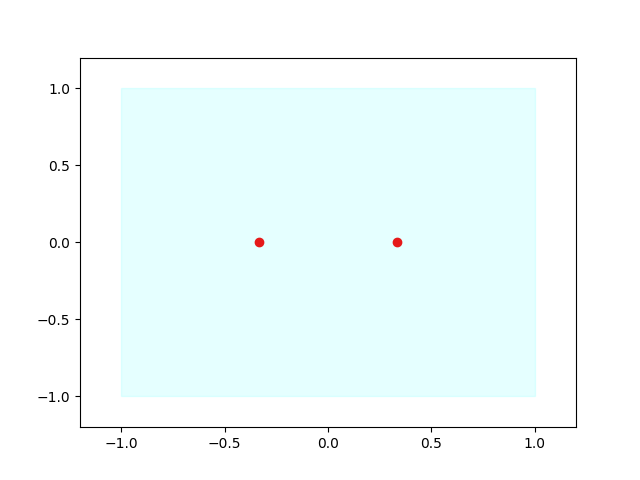}\label{fig:sub2}}\hfill
  {\includegraphics[width=0.49\linewidth]{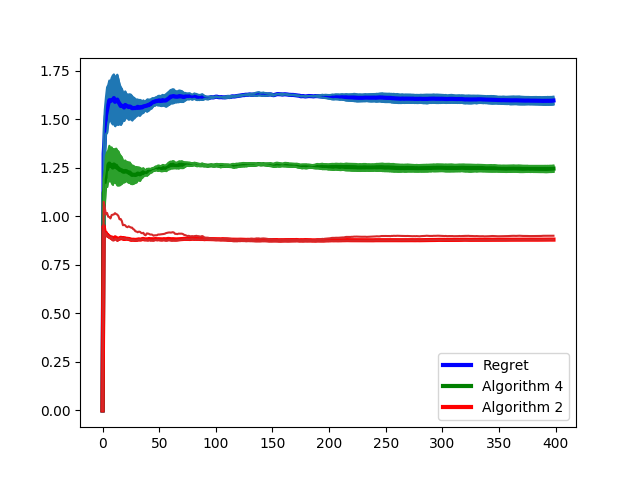}\label{fig:sub3}}\hfill
{\includegraphics[width=0.49\linewidth]{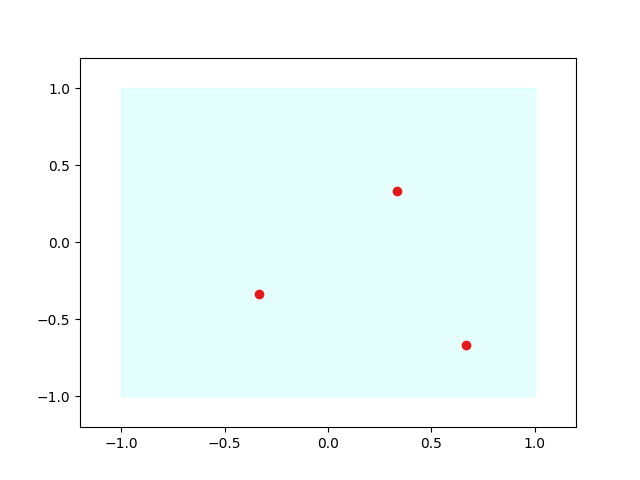}\label{fig:sub2}}\hfill
 {\includegraphics[width=0.49\linewidth]{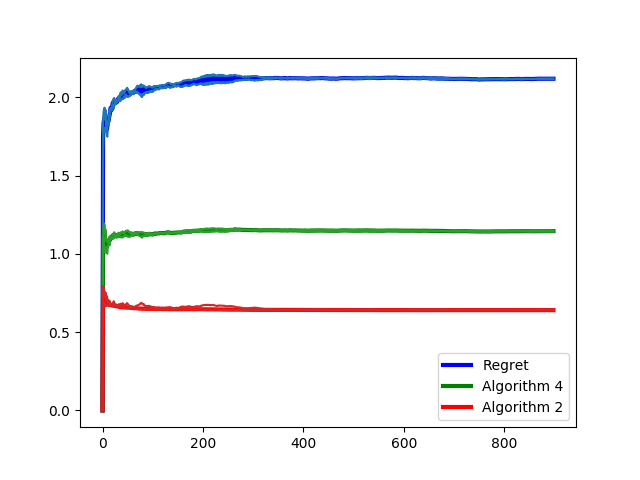}\label{fig:sub3}}\hfill
{\includegraphics[width=0.49\linewidth]{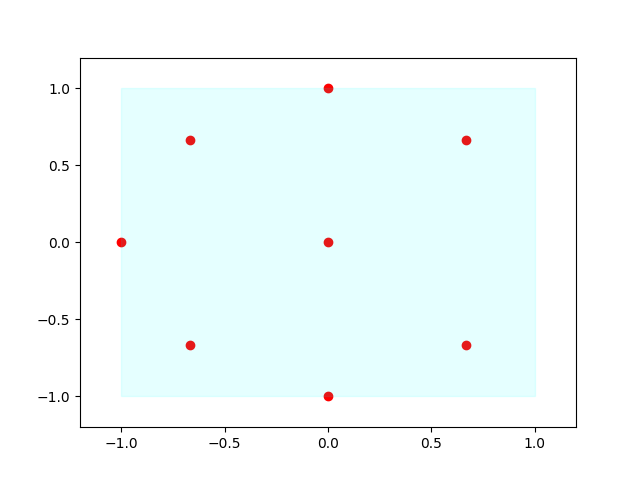}\label{fig:sub2}}\hfill
{\includegraphics[width=0.49\linewidth]{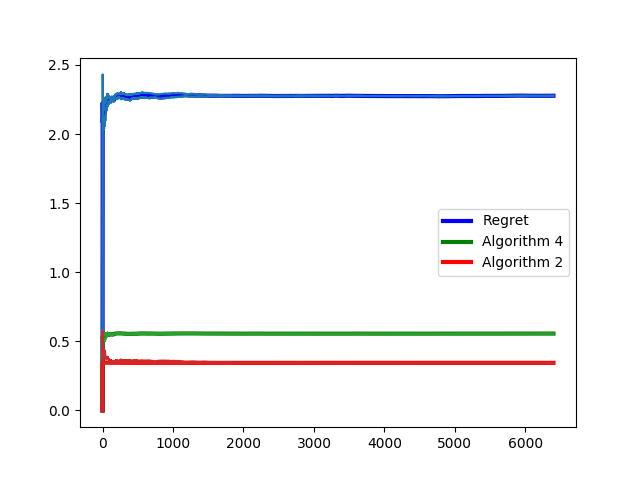}\label{fig:sub3}}\hfill
\caption{The \textcolor{green} {green curve} depicts the time-average connection cost Algorithm~\ref{alg:det}, the \textcolor{red}{red curve} depicts the time-average fractional connection cost of Algorithm~\ref{alg:frac_no_regret} and the \textcolor{blue}{blue curve} depicts their ratio that acts as an upper bound on the regret.
}\label{f:uniform_square}
\end{figure}

\textbf{Uniform Distribution with Time-Evolving Centers} In this case
the $20$ clients arrive according to the uniform distribution with radius $0.3$ and a time-varying center that periodically follows the trajectory described in Example~\ref{ex:1}. Figure~\ref{f:circle2} depicts the centers at which Algorithm~\ref{alg:det} converges after $100k^2$ rounds which are clearly close to the optimal ones.

\textbf{Moving-Clients on the Ellipse}
In this case the $20$ clients move in the ellipse $\left(\frac{x}{1.2}\right)^2 + \left(\frac{y}{0.6}\right)^2=1$ with different speeds and initial positions. The position of client $i$ is given by $\left(x_i(t),y_i(t)\right) = \left(1.2 \cos ( 2\pi f_i t + \theta_i ) , 0.6 \sin \left( 2\pi f_i t + \theta_i \right)\right)$ where
each $f_i,\theta_i$ was selected uniformly at random in $[0,1]$. Figure~\ref{fig:circle} illustrates how Algorithm~\ref{alg:det} converges to the underlying ellipse as the number of rounds increases. 
\begin{figure}[!htb]
\centering
    {\includegraphics[width=0.45\linewidth]{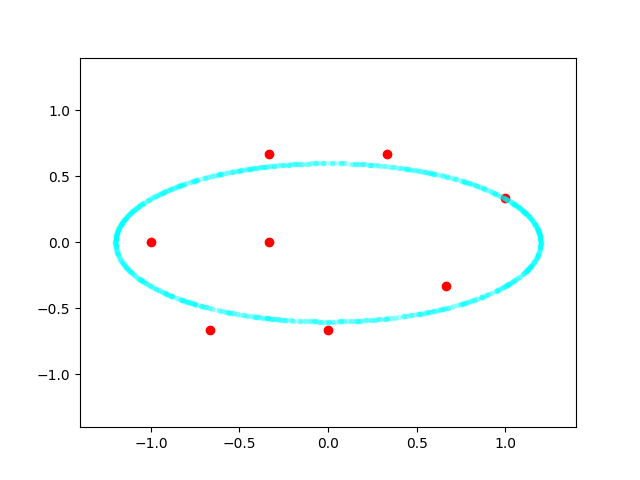}\label{fig:sub1}}\hfill
  {\includegraphics[width=0.45\linewidth]{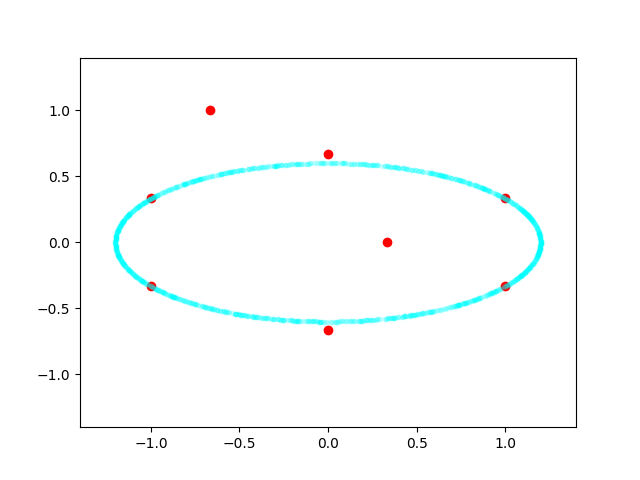}\label{fig:sub1}}\hfill
{\includegraphics[width=0.5\linewidth]{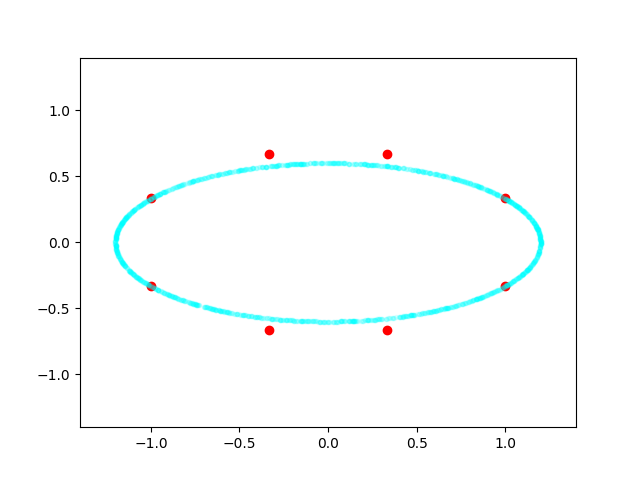}\label{fig:sub1}}\hfill
  \caption{The solution produced by Algorithm~\ref{alg:det} for $k=8$ after $100$, $1000$ and $10000$ rounds.}
    \label{fig:circle}
\end{figure}

\textbf{Mixture of Multivariate Guassians} In this case 15 clients arrive according to the Gaussian with $\mu_1 = (-0.7,0.7)$ and $\Sigma_1
=[[0.3,0],[0,0.3]]$ and $5$ according to the Gaussian with $\mu_2 = (0.7,-0.7)$ and $\Sigma_2
=[[0.3,0],[0,0.3]]$. All the clients outside the $[-1,1]\times [-1,1]$ are projected back to the square. Figure~\ref{f:gaussian} illustrates the 
solutions at which Algorithm~\ref{alg:det} converges for $k=2,8$ and $16$.
\begin{figure}[!htb]
\centering
  {\includegraphics[width=0.49\linewidth]{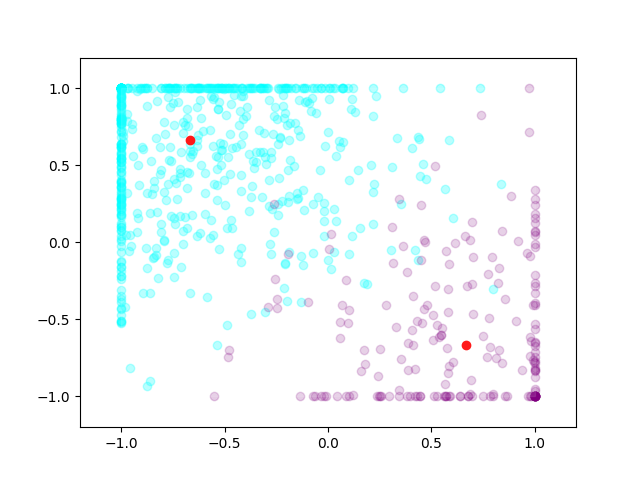}\label{fig:sub2}}\hfill
  {\includegraphics[width=0.49\linewidth]{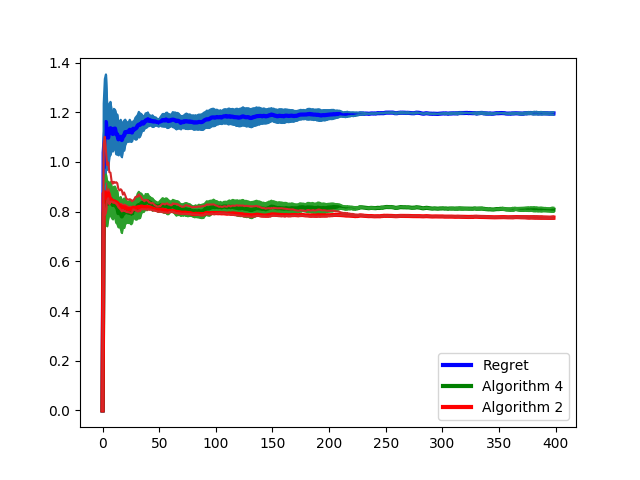}\label{fig:sub3}}\hfill
{\includegraphics[width=0.49\linewidth]{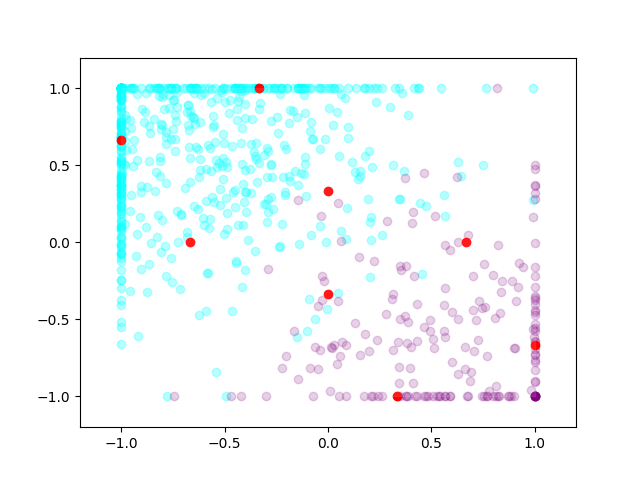}\label{fig:sub2}}\hfill
 {\includegraphics[width=0.49\linewidth]{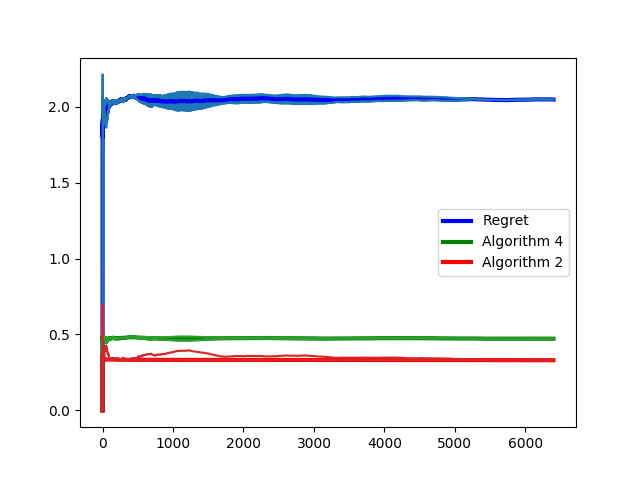}\label{fig:sub3}}\hfill
 {\includegraphics[width=0.49\linewidth]{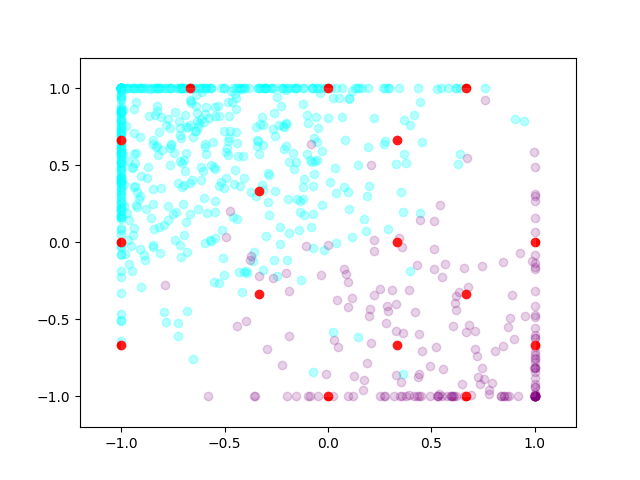}\label{fig:sub2}}\hfill
 {\includegraphics[width=0.49\linewidth]{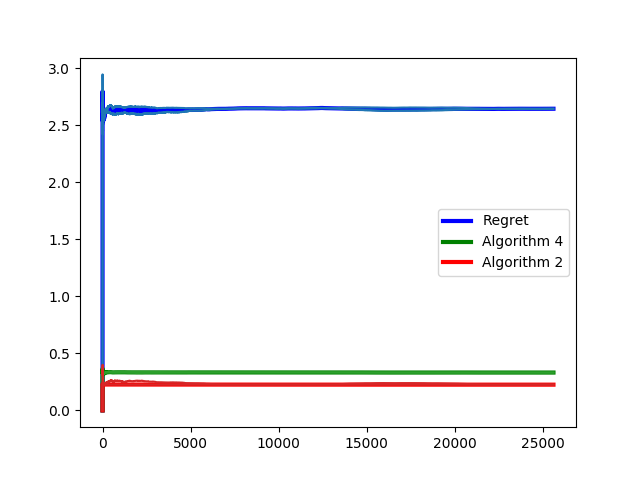}\label{fig:sub3}}\hfill
\caption{On the left, the solutions which Algorithm~\ref{alg:det} converges for $k=2,8$ and $k=16$. On the right, the time-average cost of Algorithm~\ref{alg:det}, Algorithm~\ref{alg:frac_no_regret} and the regret bounds.}
\label{f:gaussian}
\end{figure}
\section{Conclusion}
This work studies polynomial-time low-regret online learning algorithms for Dynamic $k$-Clustering, an online learning problem capturing clustering settings with time-evolving clients for which no information on their locations over time is available. We show that, under some well-established conjectures, $O(1)$-regret cannot be achieved in polynomial time and we provide a $\Theta(\min(k,r))$-regret polynomial time algorithm with $r$ being the maximum number of clients appearing in a single round. At a technical level, we present a two-step approach where in the first step we provide a no-regret algorithm for the Fractional Dynamic $k$-Clustering while in the second step we provide online rounding scheme converting the sequence of fractional solutions, produced by the no-regret algorithm, into solutions of Dynamic $k$-Clustering. Applying the same approach to other combinatorial online learning problems is an interesting research direction.

\bibliographystyle{plain}
\bibliography{example_paper}

\newpage
\onecolumn
\appendix
\section*{Appendix}

\section{Proof of Theorem~\ref{t:hardnes}}
\begin{problem}[$k-\mathrm{CenterUniform}$]
Given a uniform metric space $d:V \times V \mapsto \R_{\geq 0}$
($d(u,v) = 1$ in case $(u \neq v)$) and a set of requests
$R_1,\ldots,R_m \subseteq V$. Select $F \subseteq V$ such as $|F|=k$ and $\sum_{s=1}^m C_{R_s}(F)$ is minimized where $p$ is $\infty$.
\end{problem}

\begin{lemma}
Any $c$-approximation algorithm for $k-\mathrm{CenterUniform}$ implies a $c$-approximation algorithm for $\mathrm{Min}-p-\mathrm{Union}$.
\end{lemma}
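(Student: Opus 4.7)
The plan is to exhibit a direct, cost-preserving polynomial-time reduction from $\mathrm{Min}\text{-}p\text{-}\mathrm{Union}$ to $k\text{-}\mathrm{CenterUniform}$. Given an instance $(\mathbb{E}, \mathbb{U}=\{S_1,\ldots,S_m\})$ of $\mathrm{Min}\text{-}p\text{-}\mathrm{Union}$, I would put one point per set: $V=\{v_1,\ldots,v_m\}$ with the uniform metric, set $k=m-p$, and for every element $e\in\mathbb{E}$ create the request $R_e=\{v_i : e\in S_i\}$. This gives a $k\text{-}\mathrm{CenterUniform}$ instance of polynomial size.

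Next I would establish the bijection between feasible solutions. For any $F\subseteq V$ with $|F|=m-p$, associate the subfamily $\mathbb{U}(F)=\{S_i : v_i\notin F\}\subseteq\mathbb{U}$, which has cardinality exactly $p$; this map is clearly a bijection between feasible solutions of the two problems. The central identity to verify is
\[
\sum_{e\in\mathbb{E}} C_{R_e}(F)\;=\;\Bigl|\bigcup_{S\in\mathbb{U}(F)} S\Bigr|.
\]
Since the metric is uniform and $p=\infty$, one has $C_{R_e}(F)=\max_{v\in R_e}d(v,F)\in\{0,1\}$, and it equals $1$ exactly when some $v_i\in R_e$ lies outside $F$, i.e.\ when some $S_i\in\mathbb{U}(F)$ contains $e$. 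Summing the indicators over $e$ counts precisely the elements covered by $\mathbb{U}(F)$, which yields the identity.

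Finally I would assemble the approximation-preservation argument. Running the assumed $c$-approximation algorithm for $k\text{-}\mathrm{CenterUniform}$ on the constructed instance produces some $\hat{F}$ with $\sum_e C_{R_e}(\hat F)\le c\cdot\sum_e C_{R_e}(F^\ast)$, where $F^\ast$ is optimal. By the identity and the fact that the bijection carries optima to optima, this rewrites as $|\bigcup_{S\in\mathbb{U}(\hat F)}S|\le c\cdot|\bigcup_{S\in\mathbb{U}(F^\ast)}S|=c\cdot\mathrm{OPT}(\mathrm{Min}\text{-}p\text{-}\mathrm{Union})$, so returning $\mathbb{U}(\hat F)$ gives a $c$-approximate solution for $\mathrm{Min}\text{-}p\text{-}\mathrm{Union}$. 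The reduction, the solution translation, and the call to the oracle are each polynomial time, completing the proof.

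There is no real obstacle here; the only thing that needs care is lining up the parameters ($k=m-p$, and choosing $R_e$ as the sets \emph{containing} $e$ so that $\ell_\infty$ cost $1$ corresponds to $e$ being \emph{covered}) so that the indicator identity comes out with the correct orientation.
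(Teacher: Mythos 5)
Your proof is correct and uses exactly the paper's reduction: one point per set, uniform metric, requests $R_e=\{v_i:e\in S_i\}$, and the identity $\sum_e C_{R_e}(V')=|\bigcup_{S_i\notin V'}S_i|$. The only difference is presentational — you make explicit the parameter choice $k=m-p$ and the bijection between complements, which the paper leaves implicit.
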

\begin{proof}
Given the collection $U =\{S_1, \ldots, S_m\}$ of the $\mathrm{Min}-p-\mathrm{Union}$, we construct a uniform metric space $V$ of size $m$, where each node of $V$ corresponds to a set $S_i$.

For each elements $e \in E$ of the $\mathrm{Min}-p-\mathrm{Union}$ we construct a request $R_e \subseteq V$ for the $k-\mathrm{CenterUniform}$ that is composed by the nodes corresponding to the sets $S_i$ that containt $e$. Observe that due to the uniform metric and the fact that $p = \infty$, for any $V' \subseteq V$
\[\sum_{e \in E}C_{R_e}(V') = |\cup_{S_i \notin V'}S_i|\]
\end{proof}

\begin{lemma}
Any polynomial time $c$-regret algorithm for the online $k$-Center implies a $(c+1)$-approximation algorithm (offline) for the $k-\mathrm{CenterUniform}$.
\end{lemma}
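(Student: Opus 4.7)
The plan is to convert the $c$-regret online algorithm into an offline approximation by feeding it a random request sequence drawn from the offline instance and returning the best single snapshot. Given an instance $(V,R_1,\ldots,R_m)$ of $k\mathrm{-CenterUniform}$, I will run the online algorithm on the uniform metric over $V$ for $T$ rounds, where at each round the request $R'_t$ is sampled independently and uniformly at random from $\{R_1,\ldots,R_m\}$. Let $F_1,\ldots,F_T$ be the resulting centers and define $\mathrm{cost}(F) = \sum_{s=1}^m C_{R_s}(F)$. The offline algorithm outputs $F^{\mathrm{out}} = \arg\min_{t \in [T]} \mathrm{cost}(F_t)$.

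The analysis then takes expectation on both sides of the $c$-regret inequality $\sum_t C_{R'_t}(F_t) \leq c\min_{|F|\leq k}\sum_t C_{R'_t}(F) + \mathrm{poly}(n,D)\cdot T^\beta$. Because $R'_t$ is independent of $F_t$ (which is a function only of $R'_1,\ldots,R'_{t-1}$), we have $\E[C_{R'_t}(F_t)\mid F_t] = \mathrm{cost}(F_t)/m$, so the expected left-hand side equals $\frac{1}{m}\E[\sum_t \mathrm{cost}(F_t)]$. For the right-hand side, bounding the best static solution for the random sequence by $F^{\mathrm{opt}}$ (the offline optimum) gives $\E[\min_{|F|\leq k} \sum_t C_{R'_t}(F)] \leq \E[\sum_t C_{R'_t}(F^{\mathrm{opt}})] = T\cdot\mathrm{OPT}/m$. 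Combining, multiplying through by $m$, and dividing by $T$ yields $\E[\mathrm{cost}(F^{\mathrm{out}})] \leq \E\!\left[\tfrac{1}{T}\sum_t\mathrm{cost}(F_t)\right] \leq c\cdot\mathrm{OPT} + \tfrac{m\cdot\mathrm{poly}(n,D)\cdot T^\beta}{T}$. Since $\beta < 1$, picking $T = \Theta\!\left((m\cdot\mathrm{poly}(n,D))^{1/(1-\beta)}\right)$, which is polynomial in $m,n,D$, drives the additive slack strictly below $1$.

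To upgrade the bound $\E[\mathrm{cost}(F^{\mathrm{out}})] < c\cdot\mathrm{OPT} + 1$ to a $(c+1)$-approximation, I will use the fact that in the uniform metric with $p=\infty$, each $C_{R_s}(F)\in\{0,1\}$, so $\mathrm{cost}(\cdot)$ is integer-valued and bounded by $m$. When $\mathrm{OPT}\geq 1$ the inequality $c\cdot\mathrm{OPT}+1\leq (c+1)\mathrm{OPT}$ holds, hence $\E[\mathrm{cost}(F^{\mathrm{out}})] < (c+1)\mathrm{OPT}$; when $\mathrm{OPT}=0$ the expected cost is strictly less than $1$, so a zero-cost snapshot exists with positive probability. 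In either case Markov's inequality gives a constant success probability per run, and standard amplification ($O(\log(1/\delta))$ independent repetitions, returning the overall best snapshot) produces a polynomial-time $(c+1)$-approximation with probability at least $1-\delta$.

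The main obstacle, and the reason the precise additive $+1$ suffices (rather than a weaker multiplicative loss), is this final integer-rounding step: without the $\{0,1\}$-valuedness of costs that the uniform metric with $p=\infty$ provides, a direct Markov bound would yield only a $2c$-type ratio. A secondary technical point is verifying that the choice of $T$ required to push $m\cdot\mathrm{poly}(n,D)\cdot T^{\beta-1}$ below $1$ remains polynomial in $m,n,D$, which is guaranteed by the $\beta<1$ clause of Definition~\ref{d:regret}.
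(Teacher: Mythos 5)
Your proposal is correct and follows essentially the same reduction as the paper: run the $c$-regret online algorithm against requests drawn i.i.d.\ uniformly from $\{R_1,\ldots,R_m\}$, use independence of $R'_t$ from $F_t$ to turn the regret bound into a bound on $\E\bigl[\tfrac{1}{T}\sum_t \mathrm{cost}(F_t)\bigr]$, and pick $T$ polynomial in $m,n,D$ so that the sublinear additive term falls below $1\leq\mathrm{OPT}$. The only (harmless) deviations are that you output the best-cost snapshot $\arg\min_t\mathrm{cost}(F_t)$ rather than a uniformly random one, and that you spell out the $\mathrm{OPT}=0$ case and a Markov-plus-amplification step to get a high-probability guarantee, details the paper leaves implicit by stopping at the expected-cost bound $\E[\mathrm{ALG}]\leq (c+1)\cdot\mathrm{OPT}^\ast$.
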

\begin{proof}
Let assume that that there exists a polynomial-time online learning algorithm such that for any request sequence $R_1,\ldots,R_T$,
\[\sum_{t=1}^T \E[C_{R_t}(F_t)] \leq c~ \min_{|F^\ast|=k} \sum_{t=1}^T \E[C_{R_t}(F^\ast)] + \Theta(\mathrm{poly}(n,D) \cdot T^\alpha)\]
for some $\alpha < 1$.

Now let the requests lie on the uniform metric, 
$p=\infty$ and that the adversary at each round $t$ selects uniformly at random one of the requests $R_1,\ldots,R_m$ that are given by the instance of $k-\mathrm{CenterUniform}$. In this case the above equation takes the following form,
\[\sum_{t=1}^T \frac{1}{m} \sum_{s=1}^m \E[C_{R_s}(F_t)] \leq c~\frac{T}{m}\sum_{s=1}^m \E[C_{R_s}(\mathrm{OPT}^\ast)] + \Theta(n^\beta \cdot T^\alpha)\]
where $\mathrm{OPT}^\ast$ is the optimal solution for the instance of $k-\mathrm{CenterUniform}$ and $F_t$ is the random set that the online algorithm selects at round $t$.

Now consider the following randomized algorithm for the 
$k-\mathrm{CenterUniform}$.
\begin{enumerate}

    \item Select uniformly at random a $t$ from $\{1,\ldots,T\}$.
    \item Select a set $F \subseteq V$ according to the probability distribution $F_t$.
\end{enumerate}
The expected cost of the above algorithm, denoted by $\E[\mathrm{\mathrm{ALG}}]$, is
\begin{eqnarray*}
\frac{1}{T}\sum_{t=1}^T \sum_{i=1}^m \E_{F \sim F_t}\left[C_{R_i}(F)\right]
&=& m \cdot \left( \frac{1}{T} \sum_{t=1}^T \sum_{i=1}^m
\frac{1}{m}\E_{F\sim F_t}\left[C_{R_i}(F)\right] \right)\\
&\leq& \frac{c \cdot m}{T} \cdot \frac{T}{m} \sum_{i=1}^m C_{R_i}(\mathrm{OPT}^\ast)\\
&+& \Theta \left( \frac{ m \cdot n^{\beta}}{T^{1 - \alpha}}\right)
\end{eqnarray*}
By selecting $T = \Theta(m^{\frac{1}{1-\alpha}} \cdot n^{\frac{\beta}{1-\alpha}})$ we get that $\E[\mathrm{ALG}] \leq (c+1)\cdot \mathrm{OPT}^\ast$.
\end{proof}

\section{Proof of Theorem~\ref{t:lower_bound_det}}
Let the metric space be composed by $k+1$ points with the distance between any pair of (different) points being $1$. At each round $t$, there exists a position at which the learner has not placed a facility (there are $k+1$ positions and $k$ facilities). If the adversary places one client at the empty position of the metric space, then the deterministic online learning algorithm admits overall connection cost equal to $T$. However the optimal static solution that leaves empty the position with the least requests pays at most $T/(k+1)$. 

\section{Omitted Proof of Section~\ref{s:fractional}}\label{app:fractional}

\subsection{Proof of Lemma~\ref{l:dual}}
The Langragian of the convex program of Definition~\ref{d:frac_cost} is,
\begin{eqnarray*}
L(\beta, y, x,A,k,\lambda) &=& \left(\sum_{j \in R} \beta_j^p\right)^{1/p}\\
&+& \sum_{j \in R}\lambda_j \cdot \left(d_{ij}x_{ij} - \beta_j\right) +
\sum_{j \in R}A_j \cdot \left(1 - \sum_{i\in V}x_{ij} \right)\\
&+& \sum_{i \in V}\sum_{j \in V} k_{ij}\cdot ( x_{ij} - y_i) - \sum_{i \in V}\sum_{j \in R} \mu_{ij}\cdot x_{ij}
\end{eqnarray*}
Rearranging the terms we get,

\begin{eqnarray*}
L(\beta, y, x,A,k,\lambda) &=& \sum_{j \in R} A_j - \sum_{i \in V}\sum_{j \in R}k_{ij}\cdot y_i\\
&+& \sum_{i \in V}\sum_{j \in R} x_{ij} \cdot \left(k_{ij}
- \mu_{ij} + d_{ij} \cdot \lambda_j - A_j\right)\\
&+& \left(\sum_{j \in R} \beta_j^p\right)^{1/p} - \sum_{j \in R}\lambda_j\cdot \beta_j
\end{eqnarray*}
In order for the function $g(A,k,\lambda) = \min_{\beta,y,x,M^+,M^-} L(\beta, y, x,A,k,\lambda)$ to get a finite value the following constraints must be satisfied,

\begin{itemize}
    \item $k_{ij} + d_{ij} \cdot \lambda_j - A_j = \mu_{ij}$
    \item $||\lambda||_{p}^\ast \leq 1$ since otherwise $\left(\sum_{j \in R} \beta_j^p\right)^{1/p} - \sum_{j \in R}\lambda_j\cdot \beta_j$ can become $-\infty$.
\end{itemize}
Using the fact that the Lagragian multipliers $\mu_{ij}\geq 0$, we get the constraints of the convex program of Lemma~\ref{l:dual}. The objective comes from the fact that once  
$g(A,k,\lambda)$ admits a finite value then 
$g(A,k,\lambda) = \sum_{j \in R}A_j - \sum_{i \in V}\sum_{j \in R} k_{ij} \cdot y_i$.

\subsection{Proof of Lemma~\ref{l:subgradients}}
Let $\lambda_j^\ast,A_j^\ast,k_{ij}^\ast$ denote the values of the respective variables in the optimal solution of the convex program of Lemma~\ref{l:dual} formulated with respect to the vector $y= (y_1,\ldots,y_n)$. Respectively consider $\lambda_j',A_j',k_{ij}'$ denote the values of the respective variables in the optimal solutions of the convex program of Lemma~\ref{l:dual} formulated with respect to the vector $y'= (y_1',\ldots,y_n')$.

\begin{eqnarray}
\mathrm{FC}_{R}(y') &=& \sum_{j \in R}A_j' - \sum_{i \in V }\sum_{j \in R}k_{ij}'\cdot y_i'\\
&\geq& \sum_{j \in R}A_j^\ast - \sum_{i \in V}\sum_{j \in R}k_{ij}^\ast \cdot y_i'\\
&=& \sum_{j \in R}A_j^\ast - \sum_{i \in V}\sum_{j \in R}k_{ij}^\ast \cdot y_i'+ \sum_{i \in V}\sum_{j \in R}k_{ij}^\ast \cdot y_i - \sum_{i \in V}\sum_{j \in R}k_{ij}^\ast \cdot y_i\\
&=& \mathrm{FC}_{R}(y) + \sum_{i \in V}\sum_{j \in R}k_{ij}^\ast \cdot (y_i - y_i')
\end{eqnarray}
Equations~$5$ and~$6$ follow by strong duality, more precisely $\mathrm{FC}_{R}(y) = \sum_{j \in R}A_j^{\ast} - \sum_{i \in V }\sum_{j \in R}k_{ij}^\ast \cdot y_i$ since the convex program of Lemma~\ref{l:dual} is the dual of the convex program the solution of which defines $\mathrm{FC}_{R}(y')$ (respectively for $\mathrm{FC}_{R}(y') = \sum_{j \in R}A_j' - \sum_{i \in V }\sum_{j \in R}k_{ij}' \cdot y_i'$). Equation $4$ is implied by the fact that the solution $(\lambda',k',A')$ is optimal when the objective function is $\sum_{j \in R}A_j - \sum_{i \in V}\sum_{j \in R} k_{ij}\cdot y_i'$. Notice that the constraints of the convex program in Lemma~\ref{l:dual} do not depend on the $y$-values. As a result, the solution $(\lambda^\ast,k^\ast,A^\ast)$ (that is optimal for the dual convex program formulated for $y$) is feasible for the dual program formulated for the values $y'$. Thus Equation~$4$ follows by the optimality of $(\lambda',k',A')$.

Up next we prove the correctness of Algorithm~\ref{alg:dual}. Notice that the the solution $\beta,x$ that Algorithm~\ref{alg:dual} constructs is feasible for the primal convex program of Definition~\ref{d:frac_cost}. We will prove that the dual solution that Algorithm~\ref{alg:dual} constructs is feasible for the dual of Lemma~\ref{l:dual}
while the exact same value is obtained. 

\begin{itemize}
    \item \underline{$|| \lambda||_{p}^\ast = 1$:} It directly follows by the fact that $\lambda_j = \left[ \frac{\beta_j}{||\beta||_p} \right]^{p-1}$ and  
    $||\lambda||_{p}^\ast = \left[\sum_{j \in R} \lambda_j^{\frac{p}{p-1}}\right]^{\frac{p-1}{p}}$.
    
    \item $\underline{d_{ij}\cdot \lambda_j + k_{ij} \geq A_j:}$ In case $d_{ij} < D_j^\ast$, Algorithm~\ref{alg:dual} implies that $x_{ij} = y_i$ and the inequality directly follows. In case $d_{ij} \leq D_j^\ast$ the inequality holds trivially since $k_{ij} = 0$.  
\end{itemize}
Now consider the objective function,

\begin{eqnarray}
\sum_{j \in R} A_j - \sum_{i \in V}\sum_{j \in R} y_i \cdot k_{ij} &=&
\sum_{j \in R} A_j - \sum_{j \in R} \sum_{i \in V_j^+} y_i \cdot k_{ij} \nonumber\\
&=&
\sum_{j \in R} \lambda_j\cdot D_j - \sum_{j \in R} \sum_{i \in V_j^+} y_i \left[ \lambda_j \cdot \frac{x_{ij}}{y_i} \left( D_j - d_{ij} \right) \right] \nonumber\\
&=& \sum_{j \in R} \lambda_j \sum_{i \in V_j^+} d_{ij}\cdot x_{ij} \\
&=& \sum_{j \in R}\lambda_j \cdot \beta_j \nonumber\\
&=& \left( \sum_{j \in R} \beta_j^p \right)^{1/p} \nonumber
\end{eqnarray}
where Equation~$8$ follows by the fact that $x_{ij} = 0$ for all $j \notin V_{j}^+$ and thus $\sum_{j \in V_j^+} x_{ij} = 1$. Finally notice that $|\lambda_j| \leq 1 $ and thus $k_{ij} \leq D$ where $D$ is the diameter of the metric space.

\subsection{Proof of Theorem~\ref{t:no-regret-frac}}
By Lemma~\ref{l:subgradients}, $|g_i^t| = |-\sum_{j \in R^t}k_{ij}^{t\ast}| \leq Dr$ since $|R^t| \leq r$. Applying Theorem~$1.5$ of \cite{H16} we get that
\[ \sum_{t=1}^T \sum_{i \in V} g_i^t (y_i^t - y^\ast_i) \leq \Theta \left( kD r \sqrt{\log n T} \right)\]
Applying Lemma~\ref{l:subgradients} for $y' = y^\ast$,
\[ \sum_{t=1}^T \left(\mathrm{FC}_{R_t}(y^t) - \mathrm{FC}_{R_t}(y^\ast)\right) \leq  \sum_{t=1}^T \sum_{i \in V} g_i^t (y_i^t - y^\ast_i) \leq \Theta \left( kD r \sqrt{\log n T} \right)\]

\section{Omitted Proof of Section~\ref{s:det}}\label{app:deterministic}

\subsection{Proof of Lemma~\ref{l:rounding_lemma}}

The following claim trivially follows by Step~10 of Algorithm~\ref{alg:det}.
\begin{claim}\label{c:1}
For any node $j \in V$, $d(j,F_y) \leq 6k \cdot \beta_j^\ast$.
\end{claim}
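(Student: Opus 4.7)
The plan is to argue directly from the construction of $F_y$ in Algorithm~\ref{alg:rounding} by a simple case analysis on whether $j$ ends up in $F_y$ or not. The whole argument hinges on the monotonicity of $F_y$: once a facility is added during the loop, it remains in $F_y$ for all subsequent iterations and in the final output.

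If $j \in F_y$, there is nothing to prove since $d(j, F_y) = 0 \leq 6k \cdot \beta_j^\ast$. So the interesting case is $j \notin F_y$. Consider the iteration of the loop in Algorithm~\ref{alg:rounding} during which position $j$ is processed, and let $F_y^{(j)}$ denote the current value of $F_y$ at the beginning of that iteration. The fact that $j$ was not added means the guard in Step~7 failed, i.e.\ $\min_{i \in F_y^{(j)}} d_{ij} \leq 6k \cdot \beta_j$. In particular, $F_y^{(j)}$ is nonempty and contains some facility $i^\star$ with $d_{i^\star j} \leq 6k \cdot \beta_j = 6k \cdot \beta_j^\ast$.

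Since facilities are only ever added to $F_y$ during the loop, $F_y^{(j)} \subseteq F_y$, so $i^\star \in F_y$. Therefore
\[
d(j, F_y) \;=\; \min_{i \in F_y} d_{ij} \;\leq\; d_{i^\star j} \;\leq\; 6k \cdot \beta_j^\ast,
\]
which is exactly the claim. I do not anticipate any technical obstacle here: the statement is essentially a direct restatement of the acceptance rule in Step~7 combined with the monotonicity of $F_y$, and no properties of $\beta_j^\ast$ beyond being the value produced by Algorithm~\ref{alg:dual} are needed.
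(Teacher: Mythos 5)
Your argument is correct and is exactly the reasoning the paper has in mind: the claim is stated there as following "trivially" from the acceptance rule in the rounding scheme, and your case analysis (either $j$ is added, giving $d(j,F_y)=0$, or the guard failed and the witnessing facility survives because $F_y$ only grows) just spells that out. No gap; the sorting order of the $\beta_i$'s is indeed irrelevant for this particular claim.
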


We are now ready to prove the first item of Lemma~\ref{l:rounding_lemma}. Let a request $R \subseteq V$,

\[\mathrm{C}_{R}(F_y) = \left( \sum_{j \in R} d(j,F_y)^p \right)^{1/p}
\leq \left( \sum_{j \in R} (6k)^p \cdot \beta_j^{\ast p} \right)^{1/p}
= 6k \cdot \left( \sum_{j \in R} \beta_j^{\ast p} \right)^{1/p}
\]

We proceed with the second item of Lemma~\ref{l:rounding_lemma}. For a given node $j \in S$, let $B_j =\{i \in V: d_{ij} \leq 3k \cdot \beta_j^\ast\}$. It is not hard to see that for any $j \in F_y$,
\[\sum_{i \in B_j}y_i \geq 1 - \frac{1}{3k}\]
Observe that in case the latter is not true then $\sum_{i \notin B_j}x_{ij}^\ast \geq \frac{1}{3k}$, which would imply that $\beta_j^\ast > \beta_j^\ast$. 

The second important step of the proof is that for any $j,j' \in F_y$, \[B_j \cap B_{j'} = \emptyset.\]
\noindent Observe that in case there was $m \in B_j \cap B_{j'}$ would imply $d(j,m) \leq 3k \cdot \beta_j^\ast$ and
$d(j',m) \leq 3k \cdot \beta_{j'}^\ast$. By the triangle inequality we get $d(j,j') \leq 6k \cdot \beta_{j'}^\ast$ (without loss of generality $\beta_j^\ast \leq \beta_{j'}^\ast$). The latter contradicts with the fact that both $j$ and $j'$ belong in set $F_y$.

Now assume that $|F_y| \geq k + 1$. Then $\sum_{i \in F_y} y_i \geq |F_y|\cdot (1 - \frac{1}{3k}) \geq (k+1)\cdot (1 - \frac{1}{3k}) > k$. But the latter contradicts with the fact that $\sum_{i \in V}y_i = k$. As a result, $|F_y| \leq k$.

\section{Omitted Proofs of Section~\ref{s:rand}}\label{app:rand}

\begin{proof}[Proof of Theorem~\ref{t:rand-regret}]
To simplify notation the quantity $\E_{F \sim \mathrm{CL}(y_t)}[C_{R_t}(F)]$ is denoted as $\E[C_{R_t}(F_t)]$. At first notice that by the first case of Lemma~\ref{l:Charikar-Lin}, Algorithm~\ref{alg:rand} ensures that exactly $k$ facilities are opened at each round $t$.

Concerning its overall expected connection cost we get,
\[
\E\left[C_{R_t}(F_{y_t})\right] \leq \sum_{j \in R_t}\E[C_{\{j\}}(F_{y_t})]
\leq 4 \sum_{j \in R_t}\mathrm{FC}_{\{j\}}(y_t)
\]
where the fist inequality is due to the fact that $\sum_{j \in R_t}\mathrm{d}(j,F)^p \leq \left(\sum_{j \in R_t}\mathrm{d}(j,F)\right)^p$ and the second is derived by applying the second case of Lemma~\ref{l:Charikar-Lin}. We overall get,
\begin{eqnarray}
\sum_{t=1}^T \E[C_{R_t}(F_{y_t})]
&\leq& 4 \sum_{t=1}^T\sum_{j \in R_t}\mathrm{FC}_{\{j\}}(y_t) \nonumber\\
&\leq& 4\sum_{t=1}^T |R_t| \cdot \mathrm{FC}_{R_t}(y_t)\\
&\leq& 4r \min_{y^\ast} \sum_{t=1}^T
\mathrm{FC}_{R_t}(y^\ast) \nonumber\\ &+& \Theta \left( kD r \sqrt{\log n T} \right) \nonumber\\
&\leq& 4r \min_{|F^\ast| = k} \sum_{t=1}^T
\E[ \mathrm{C}_{R_t}(F^\ast)] \nonumber\\ &+& \Theta \left( kD r \sqrt{\log n T} \right)\nonumber
\end{eqnarray}
where inequality~$3$ follows by the fact that $\mathrm{FC}_{\{j\}}(y) \leq \mathrm{FC}_{\{R\}}(y)$ for all $j \in R$
and the last two inequalities follow by Theorem~\ref{t:no-regret-frac} and Lemma~\ref{l:frac_int} respectively.
\end{proof}

\end{document}